\newtheorem{thm}{Theorem}
\newtheorem{cor}[thm]{Corollary}
\theoremstyle{definition}
\theoremstyle{plain}
\definecolor{darkgreen}{rgb}{0,.4,.2}
\definecolor{darkblue}{rgb}{.1,.2,.6}
\definecolor{brightblue}{rgb}{0,0.6,0.8}
\def\LONGVERSION{}
  \newcommand{\citepA}{\citep}
  \newcommand{\citetA}{\citet}
  \newcites{A}{References}
  \title{PAC-Bayesian Theory Meets Bayesian Inference}
\author{
	Pascal Germain$^\dagger$ \quad Francis Bach$^\dagger$  \quad Alexandre Lacoste$^\ddagger$ \quad Simon Lacoste-Julien$^\dagger$\\
	$^\dagger$ INRIA Paris - \'Ecole Normale Sup\'erieure, \ \texttt{firstname.lastname@inria.fr}\\
		$^\ddagger$ Google, \ \texttt{allac@google.com}
}
\begin{document}

\maketitle

\begin{abstract}
We exhibit a strong link between frequentist PAC-Bayesian risk bounds and the Bayesian marginal likelihood.
That is, for the negative log-likelihood loss function, we show that the minimization of PAC-Bayesian generalization risk bounds maximizes the Bayesian marginal likelihood.
This provides an alternative explanation to the Bayesian Occam's razor criteria, under the assumption that the data is generated by an \iid~distribution. 
Moreover, as the negative log-likelihood is an unbounded loss function, we motivate and propose a PAC-Bayesian theorem tailored for the sub-gamma loss family, and we show that our approach is sound on classical Bayesian linear regression tasks. 
\end{abstract}

\section{Introduction}

Since its early beginning \citep{mcallester-99, shawetaylor-97}, the PAC-Bayesian theory claims to provide ``PAC guarantees to \emph{Bayesian} algorithms'' (\citet{mcallester-99}).
However, despite the amount of work dedicated to this statistical learning theory---many authors improved the initial results
\citep{catoni-07,lever-13,mcallester-03a,seeger-thesis,tolstikhin-13}
and/or generalized them for various machine learning setups
\citep{graal-aistats14,graal-icml16-dalc,grunwald&mehta-16,langford-02,pentina-14,seldin-10,seldin-11,seldin-12}%
---it is 
mostly used as a \emph{frequentist} method. That is, under the assumptions that the learning samples are \iid-generated by a data-distribution, this theory expresses \emph{probably approximately correct} (PAC) bounds on the generalization risk. In other words, with probability $1{-}\delta$, the generalization risk is at most $\varepsilon$ away from the training risk.  The \emph{Bayesian} side of PAC-Bayes comes mostly from the fact that these bounds are expressed on the averaging/aggregation/ensemble of multiple predictors (weighted by a \emph{posterior} distribution) and  incorporate prior knowledge.  Although it is still sometimes referred as a theory that bridges the Bayesian and frequentist approach 
\cite[\eg,][]{guyon-10},
 it has been merely used to  justify Bayesian methods until now.\footnote{Some existing connections \citep{banerjee-06,bissiri-16,grunwald-2012,lacoste-thesis,seeger-02,seeger-thesis,zhang-06} are discussed in Appendix~\ref{appendix:related}.}

In this work, we provide a direct connection between Bayesian inference techniques \citep[summarized by][]{bishop-2006,zoubin-15} and PAC-Bayesian risk bounds in a general setup.
Our study is based on a simple but insightful connection between the Bayesian marginal likelihood and PAC-Bayesian bounds (previously mentioned by \citet{grunwald-2012}) obtained by considering the negative log-likelihood loss function (Section~\ref{section:marginal_likelihood}). By doing so, we provide an alternative explanation for the Bayesian Occam's razor criteria  \citep{jeffreys-92,mackay-92} in the  context of model selection, expressed as the complexity-accuracy trade-off appearing in most PAC-Bayesian results. In Section~\ref{sec:more-bounds}, we extend PAC-Bayes theorems to regression problems with unbounded loss, adapted to the negative log-likelihood loss function. Finally, we study the Bayesian model selection from a PAC-Bayesian perspective (Section~\ref{sec:bayesian_model_comp}), and illustrate our finding on classical Bayesian regression tasks (Section~\ref{section:linreg}).

\section{PAC-Bayesian Theory}

\newcommand{\fcmt}[1]{\red{\footnote{\blue{#1}}}}
\newcommand{\cmt}[1]{\blue{\small{#1}}}

We denote the learning sample 
$(X,Y){=}\{(x_i,y_i)\}_{i=1}^n{\in}(\Xcal{\times}\Ycal)^n$, 
that contains $n$ input-output pairs. 
The main assumption of frequentist learning theories---including PAC-Bayes---is that $(\Dsample)$ is randomly sampled from a data generating distribution that we denote~$\Dgen$. 
Thus, we denote $(\Dsample){\sim}\Dgen^n$ the \iid observation of $n$ elements.
From a frequentist perspective, we consider in this work loss functions  $\loss:\Fcal{\times}\Xcal{\times}\Ycal\to\Rbb$,
where $\Fcal$ is a (discrete or continuous) set of predictors $f:\Xcal\to\Ycal$, and we write the empirical risk on the sample ($\Dsample$) and the generalization error on  distribution $\Dgen$ as
\begin{equation*}
\emploss(f)
\ = \ \frac1n \sum_{i=1}^n  \loss(f, x_i,y_i) \,;
\quad
\genloss(f)
\ = \, \Esp_{(x,y)\sim\Dgen} \loss(f, x,y) 
\,.
\end{equation*} 
The PAC-Bayesian theory \citep{mcallester-99,mcallester-03a} studies an averaging of the above losses according to a \emph{posterior} distribution $\post$ over  $\Fcal$. That is, it provides \emph{probably approximately correct} generalization bounds on the (unknown) quantity
$\Esp_{f\sim\post} \genloss(f)
\ = \, \Esp_{f\sim\post}  \Esp_{(x,y)\sim\Dgen} \loss(f, x,y) 
\,,$
given the empirical estimate  $\Esp_{f\sim\post} \emploss(f)$ and some other parameters. Among these, most PAC-Bayesian theorems rely on the \emph{Kullback-Leibler} divergence 
$\KL(\post\|\prior) \, = \, \Esp_{f\sim\post} \ln [\post(f) / \prior(f)]$ 
between a \emph{prior} distribution $\prior$ over~$\Fcal$---specified before seeing the learning sample $\Dsample$---and the posterior~$\post$---typically obtained by feeding a learning process with $(\Dsample)$. 

Two appealing aspects of PAC-Bayesian theorems are that they provide data-driven generalization bounds that are computed on the training sample  (\ie, they do not rely on a testing sample), and that they are uniformly valid for all $\post$ over $\Fcal$. This explains why  many works study them as model selection criteria or as an inspiration for learning algorithm conception. Theorem~\ref{thm:pacbayescatoni}, due to \citet{catoni-07}, has been used to derive or study learning algorithms \citep{graal-icml09,hazan-13,mcallester&keshet-11,noy-14}. 
\begin{thm}[\citet{catoni-07}] 
	\label{thm:pacbayescatoni}
	Given a distribution $\Dcal$ over  $\Xcal   \times   \Ycal$, a hypothesis set $\Fcal$, a loss function $\losszo:\Fcal\times\Xcal\times\Ycal\to[0,1]$,   a prior distribution $\prior$ over $\Fcal$, a real number $\delta \in (0,1]$, and a real number $\beta>0$,  with probability at least $1 - \delta$ over the choice of $(\Dsample) \sim  \Dgen^n $,  we have
	\begin{equation} \label{eq:catoni07}
	\forall \mbox{$\post$ on $\Fcal$}\,:\quad
	\Esp_{f\sim\post} \genlosszo(f) \ \leq\ 
	\frac{1}{1 - e^{-\beta}}    \left[1 - e^{ -\beta\,\Esp_{f\sim\post} \emplosszo(f){-}\tfrac1n\big(\KL(\post\|\prior) {+} \ln  \tfrac{1}{\delta}\big)}\right] .
	\end{equation}
\end{thm}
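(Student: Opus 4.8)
The plan is to follow the classical PAC-Bayesian recipe: a change-of-measure step, control of an exponential moment via the \iid~structure, Markov's inequality, and two convexity arguments, arranged so that the boundedness of $\losszo$ is used exactly once. First I would record the change-of-measure inequality: for every measurable $\phi:\Fcal\to\Rbb$ and every $\post,\prior$ on $\Fcal$,
\[
\Esp_{f\sim\post}\phi(f) \ \le\ \KL(\post\|\prior) + \ln\Esp_{f\sim\prior}e^{\phi(f)} ,
\]
which is immediate from $\KL(\post\|\prior')\ge 0$ for the tilted measure $\prior'(f)\propto\prior(f)e^{\phi(f)}$ (Donsker--Varadhan / Gibbs variational principle).

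Next I would apply this with $\phi(f) := -\beta\sum_{i=1}^n \losszo(f,x_i,y_i) - n\ln\Esp_{(x,y)\sim\Dgen}e^{-\beta\,\losszo(f,x,y)}$. For each fixed $f$, the \iid~assumption gives $\Esp_{(\Dsample)\sim\Dgen^n}e^{\phi(f)}=1$, since $e^{\phi(f)}$ factors into $n$ independent terms of mean $1$. Integrating over $f\sim\prior$ and exchanging expectations yields $\Esp_{(\Dsample)}\Esp_{f\sim\prior}e^{\phi(f)}=1$, so Markov's inequality gives, with probability at least $1-\delta$ over $(\Dsample)$, that $\Esp_{f\sim\prior}e^{\phi(f)}\le\tfrac1\delta$. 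This is a single sample event that does not depend on $\post$; on it the change-of-measure inequality holds for \emph{all} $\post$ at once, and dividing by $n$ gives
\[
\Esp_{f\sim\post}\!\Big[-\ln\Esp_{(x,y)\sim\Dgen}e^{-\beta\,\losszo(f,x,y)}\Big] \ \le\ \beta\,\Esp_{f\sim\post}\emplosszo(f) + \tfrac1n\big(\KL(\post\|\prior)+\ln\tfrac1\delta\big) \ =:\ B .
\]

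It remains to replace the left-hand side by $\Esp_{f\sim\post}\genlosszo(f)$. Because $\losszo$ is $[0,1]$-valued and $z\mapsto e^{-\beta z}$ is convex, it lies below its chord on $[0,1]$, i.e.\ $e^{-\beta z}\le 1-(1-e^{-\beta})z$; taking $\Esp_{(x,y)\sim\Dgen}$ and then $-\ln(\cdot)$ gives $-\ln\Esp_{(x,y)}e^{-\beta\,\losszo(f,x,y)} \ge -\ln\!\big(1-(1-e^{-\beta})\genlosszo(f)\big)$. Since $z\mapsto-\ln(1-(1-e^{-\beta})z)$ is convex on $[0,1]$ (as $1-e^{-\beta}\in(0,1)$), Jensen's inequality yields
\[
-\ln\!\big(1-(1-e^{-\beta})\,\Esp_{f\sim\post}\genlosszo(f)\big) \ \le\ \Esp_{f\sim\post}\!\Big[-\ln\!\big(1-(1-e^{-\beta})\genlosszo(f)\big)\Big] \ \le\ B .
\]
Exponentiating, $1-(1-e^{-\beta})\Esp_{f\sim\post}\genlosszo(f)\ge e^{-B}$, and isolating $\Esp_{f\sim\post}\genlosszo(f)$ gives exactly \eqref{eq:catoni07}.

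The change-of-measure lemma, the Markov step and the final algebra are routine; the one point needing care is the \emph{order} of operations. The empirical loss must be exponentiated together with its per-example log-partition normalization $\ln\Esp_{(x,y)}e^{-\beta\,\losszo(f,x,y)}$, and the change of measure applied to that whole quantity, so that the chord bound---the only place where $\losszo\in[0,1]$ enters---is invoked just once, inside a logarithm. This is what produces the multiplicative factor $\tfrac1{1-e^{-\beta}}$ and the nested exponential of the statement; applying the chord bound first, before the change of measure, would only yield the weaker form in which $\beta\,\Esp_{f\sim\post}\emplosszo(f)$ appears linearly.
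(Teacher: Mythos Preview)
Your proof is correct and follows the standard Catoni argument. Note, however, that the paper does \emph{not} supply its own proof of Theorem~\ref{thm:pacbayescatoni}; it simply cites \citet{catoni-07} and uses the result. There is therefore no in-paper proof to compare against.

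That said, your approach is fully consistent with the machinery the paper does develop: the Donsker--Varadhan change of measure (Equation~\eqref{eq:changeofmeasure} in Appendix~\ref{appendix:pacbayes_proofs}) followed by Markov's inequality is exactly how the paper proves the related Theorem~\ref{thm:general-alquier}. The difference is in the choice of $\phi$: the paper takes $\phi(f)=\lambda(\genloss(f)-\emploss(f))$ and then bounds the exponential moment under distributional assumptions (bounded, sub-Gaussian, sub-gamma), whereas you take the self-normalized $\phi(f)=-\beta\sum_i\losszo(f,x_i,y_i)-n\ln\Esp_{(x,y)}e^{-\beta\losszo(f,x,y)}$ so that the exponential moment is exactly $1$, and defer the use of boundedness to the final chord-plus-Jensen step. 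Your route is precisely Catoni's, and it is what produces the specific nonlinear shape of \eqref{eq:catoni07} with its $\tfrac{1}{1-e^{-\beta}}$ factor; the paper's route for Theorem~\ref{thm:general-alquier} instead yields additive complexity terms like $\tfrac{\lambda^2 s^2}{2n}$ or $\tfrac12(b-a)^2$.
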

Theorem~\ref{thm:pacbayescatoni} is limited to loss functions mapping to the range $[0,1]$.  
Through a straightforward rescaling we can extend it to any bounded loss, \ie, $\loss:\Fcal\times\Xcal\times\Ycal\to[a,b]$, where $[a,b]\subset\Rbb$. This is done by 
using 
$\beta\eqdots b-a$ and with the \emph{rescaled}  loss function
$\losszo (f,x,y) \,\eqdots\, {(\loss(f,x,y){-}a)}/{(b{-}a)}\in[0,1]\,.$
After few arithmetic manipulations, we can rewrite Equation~\eqref{eq:catoni07} as
\begin{eqnarray}\label{eq:catoni_ab}
	\forall \mbox{$\post$ on $\Fcal$} :\ \ 
	\Esp_{f\sim\post}\! \genloss(f) \,\leq\, a+\tfrac{b-a}{1 - e^{a-b}}    \Big[1 {-} \exp\Big({ -\Esp_{\mathclap{f\sim\post}}\, \emploss(f)  {+}a{-}\tfrac1n\big(\KL(\post\|\prior) {+} \ln  \tfrac{1}{\delta}\big)}\Big)\Big]\,.
\end{eqnarray}
From an algorithm design perspective, Equation~\eqref{eq:catoni_ab} suggests optimizing a trade-off between the empirical expected loss and the Kullback-Leibler divergence. Indeed,
for fixed $\prior$, $X$, $Y$, $n$, and $\delta$, minimizing Equation~$\eqref{eq:catoni_ab}$
is equivalent to find the distribution $\post$ that minimizes
\begin{equation}\label{eq:a_minimser_beta}
n\,\Esp_{\mathclap{f\sim\post}}\, \emploss(f)  +  \KL(\post\|\prior)\,.
\end{equation}
It is well known \citep{alquier-15,catoni-07,graal-icml09,lever-13} that the \emph{optimal Gibbs posterior} $\post^*$ is given by
\begin{equation}\label{eq:post_catoni}
\post^*(f) \, = \, \tfrac1{\ZDs} \prior(f) \,e^{-n\,\emploss(f)}\,,
\end{equation}
where $\ZDs$ is a normalization term. Notice that the constant $\beta$ of Equation~\eqref{eq:catoni07} is now absorbed in the loss function as the rescaling factor 
setting the trade-off between the expected empirical loss and~$\KL(\post\|\prior)$. 

\section{Bridging Bayes and PAC-Bayes}
\label{section:marginal_likelihood}

In this section, we show that by choosing the negative log-likelihood loss function,
minimizing the PAC-Bayes bound is equivalent to maximizing the Bayesian marginal likelihood.
To obtain this result, we first consider the Bayesian approach that starts by defining a prior $p(\theta)$ over the set of possible model parameters $\Theta$. This induces a set of probabilistic estimators $f_\theta \in \Fcal$, mapping $x$ to a probability distribution over $\Ycal$. Then, we can estimate the likelihood of observing $y$ given $x$ and $\theta$, \ie, $p(y|x,\theta) \equiv f_\theta(y|x)$.\footnote{To stay aligned with the PAC-Bayesian setup, we only consider the discriminative case in this paper. One can extend to the generative setup by considering the likelihood of the form $p(y,x|\theta)$ instead.} Using Bayes' rule, we obtain the posterior $p(\theta|X,Y)$:
\begin{samepage}
\begin{equation} \label{eq:bayes_updaterule}
p(\theta|\Dsample) 
\,=\, \frac{p(\theta)\,p(Y|X,\theta)}{p(Y|X)}
\, \propto\, 
p(\theta)\,p(Y|X,\theta)\,,
\end{equation}
where $p(Y|X,\theta) \, = \, \prod_{i=1}^n p(y_i|x_i,\theta)$ and 
$p(Y|X) = \int_\Theta p(\theta) \,p(Y|X,\theta)\, d\theta$.
\end{samepage}

To bridge the Bayesian approach with the PAC-Bayesian framework, we consider the \emph{negative log-likelihood} loss function 
\citep{banerjee-06}, 
denoted $\lossnll$ and defined by 
\begin{equation} \label{eq:loss_vs_likelihood}
\lossnll(f_\theta,x,y) \ \equiv\ -\ln{p(y|x,\theta)}\,.
\end{equation}
Then, we can relate the \emph{empirical loss} $\emploss$ of a predictor to its likelihood:
\begin{equation*}
	\emplossnll(\theta) 
	\ = \ \frac1n \sum_{i=1}^n \lossnll(\theta, x_i ,y_i)
	\ = \  -\frac1n \sum_{i=1}^n \ln p( y_i |x_i, \theta)
   \ = \  -\frac1n \ln p(Y|X,\theta) \,,
\end{equation*}
or, the other way around,
\vspace{-2mm}
\begin{equation} \label{eq:dataset_likelihhod}
p(Y|X,\theta) \, = \, e^{-n\, \emplossnll(\theta)}\,. 
\end{equation}

Unfortunately, existing PAC-Bayesian theorems work with bounded loss functions or in very specific contexts \citep[\eg,][]{dalalyan-08,zhang-06},  and $\lossnll$ spans the whole real axis in its general form. In Section~\ref{sec:more-bounds}, we explore PAC-Bayes bounds for unbounded losses. Meanwhile, we consider priors with bounded likelihood. This can be done by assigning a prior of zero to any $\theta$ yielding $\ln \frac{1}{p(y|x,\theta)} \notin [a,b]$. 

Now, using Equation~\eqref{eq:dataset_likelihhod} in the optimal posterior (Equation~\ref{eq:post_catoni}) simplifies to
\begin{eqnarray} \label{eq:post_catoni_theta} 
\post^*(\theta)
=
\frac{\prior(\theta) \,e^{-n\,\emplossnll(\theta)}}{\ZDs}\,
\,=\,  
\frac{p(\theta)\,p(Y|X,\theta)}{p(Y|X)}
 = 
p(\theta|\Dsample)\,,
\end{eqnarray}
where the normalization constant $\ZDs$ corresponds to the Bayesian \emph{marginal likelihood}:
\begin{equation} \label{eq:post_catoni_Zb} 
\ZDs \ \equiv \ p(Y|X) \ = \ 
\int_{\Theta}\prior(\theta) \,e^{-n\,\emplossnll(\theta)} d \theta\,.
\end{equation}
This shows that the optimal PAC-Bayes posterior given by the generalization bound of Theorem~\ref{thm:pacbayescatoni} coincides with the Bayesian posterior, when one chooses $\lossnll$ as loss function and $\beta\eqdots b{-}a$ (as in Equation~\ref{eq:catoni_ab}).
Moreover, using the posterior of Equation~\eqref{eq:post_catoni_theta} inside Equation~\eqref{eq:a_minimser_beta}, we obtain
\begin{eqnarray} \label{eq:putback} 
& & \hspace{-.8cm} n \Esp_{\theta\sim\post^*} \emplossnll(\theta)  +  \KL(\post^*\|\prior)\\[-1mm]
&=& \nonumber
n  \int_\Theta \tfrac{\prior(\theta) \,e^{-n\,\emplossnll(\theta)}}{\ZDs} \emplossnll(\theta)\, d\theta
+  \int_\Theta \tfrac{\prior(\theta) \,e^{-n\,\emplossnll(\theta)}}{\ZDs} 
\ln \Big[ \tfrac{\prior(\theta) \,e^{-n\,\emplossnll(\theta)}}{\prior(\theta)\, \ZDs} %
\Big]
 d\theta\\[-1mm]
&=& \nonumber
\int_\Theta \tfrac{\prior(\theta) \,e^{-n\,\emplossnll(\theta)}}{\ZDs} 
\left[\ln\tfrac1{\ZDs}\right]
 d\theta
\ = \ \tfrac{\ZDs}{\ZDs}  \ln\tfrac1{\ZDs} \ = \ -\ln{\ZDs}\,.
\end{eqnarray}
In other words, minimizing the PAC-Bayes bound is equivalent to maximizing the marginal likelihood.
Thus, from the PAC-Bayesian standpoint, the latter encodes  a trade-off between the averaged negative log-likelihood loss function and the prior-posterior Kullback-Leibler divergence.
Note that Equation~\eqref{eq:putback} has been mentioned by \citet{grunwald-2012}, based on an earlier observation of \citet{zhang-06}. However, the PAC-Bayesian theorems proposed by the latter do not bound the generalization loss directly, as the ``classical'' PAC-Bayesian results \cite{catoni-07,mcallester-99,seeger-02} that we extend to regression in forthcoming Section~\ref{sec:more-bounds} (see the corresponding remarks in Appendix~\ref{appendix:related}).

We conclude this section by proposing a compact form of Theorem~\ref{thm:pacbayescatoni} by expressing it in terms of the  marginal likelihood, 
as a direct consequence  of Equation~\eqref{eq:putback}. 
\begin{cor}%
	\label{thm:pacbayescatoni_maglike}
	Given a data distribution $\Dcal$, a parameter set $\Theta$,  a prior distribution $\prior$ over $\Theta$, a $\delta \in (0,1]$, if $\lossnll$ lies in $[a,b]$, we have, with probability at least $1 - \delta$ over the choice of  $(\Dsample) \sim  \Dgen^n $,
	\begin{equation*} %
	\Esp_{\theta\sim\post^*} \genlossnll(\theta) 
			\ \leq \ 
			a + \tfrac{ b-a}{1 - e^{a-b}} \left[ 1-e^a \,\sqrt[n]{\ZDs\,\delta}\,\right] ,
	\end{equation*}		
	where $\post^*$ is the Gibbs optimal posterior (Eq.~\ref{eq:post_catoni_theta}) 
	and $\ZDs$ is the marginal likelihood (Eq.~\ref{eq:post_catoni_Zb}).
\end{cor}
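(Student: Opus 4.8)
The plan is to obtain the corollary directly from the rescaled Catoni bound, Equation~\eqref{eq:catoni_ab}, specialised to the negative log-likelihood loss $\lossnll$ and to the optimal Gibbs posterior $\post^*$ of Equation~\eqref{eq:post_catoni_theta}. Since $\lossnll$ takes values in $[a,b]$ by hypothesis, Equation~\eqref{eq:catoni_ab} applies with this loss; and because that inequality holds, with probability at least $1-\delta$, simultaneously for \emph{all} posteriors on $\Theta$ --- in particular for the data-dependent $\post^*$ --- we may substitute $\post=\post^*$ into it. This substitution is the only real ``move'' in the argument; everything else is bookkeeping.

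Concretely, I would first write Equation~\eqref{eq:catoni_ab} at $\post=\post^*$:
\[
\Esp_{\theta\sim\post^*}\genlossnll(\theta)\ \leq\ a+\tfrac{b-a}{1-e^{a-b}}\Big[1-\exp\Big({-}\Esp_{\theta\sim\post^*}\emplossnll(\theta)+a-\tfrac1n\big(\KL(\post^*\|\prior)+\ln\tfrac1\delta\big)\Big)\Big].
\]
Then I would simplify the exponent by regrouping: the term ${-}\Esp_{\theta\sim\post^*}\emplossnll(\theta)-\tfrac1n\KL(\post^*\|\prior)$ equals $-\tfrac1n\big(n\,\Esp_{\theta\sim\post^*}\emplossnll(\theta)+\KL(\post^*\|\prior)\big)$, which by Equation~\eqref{eq:putback} is $-\tfrac1n(-\ln\ZDs)=\tfrac1n\ln\ZDs$. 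Adding the leftover $-\tfrac1n\ln\tfrac1\delta=\tfrac1n\ln\delta$ and the constant $a$, the exponent becomes $a+\tfrac1n\ln(\ZDs\,\delta)$, so its exponential is $e^a\,(\ZDs\,\delta)^{1/n}=e^a\sqrt[n]{\ZDs\,\delta}$. Plugging this back gives exactly the stated bound.

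The only point that warrants a check --- and it is routine, not an obstacle --- is that $\post^*$ is a genuine probability distribution, i.e. that $\ZDs=\int_\Theta\prior(\theta)\,e^{-n\,\emplossnll(\theta)}\,d\theta$ is finite and strictly positive, so that $\KL(\post^*\|\prior)$ and $\Esp_{\theta\sim\post^*}\emplossnll(\theta)$ are well defined and identity~\eqref{eq:putback} applies. This is guaranteed precisely by the boundedness assumption $\lossnll\in[a,b]$: on the support of $\prior$ one has $e^{-nb}\le e^{-n\,\emplossnll(\theta)}\le e^{-na}$, hence $0<e^{-nb}\le\ZDs\le e^{-na}$. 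Thus no genuine difficulty arises: the corollary is simply Theorem~\ref{thm:pacbayescatoni} (in its $[a,b]$-rescaled form) re-expressed in terms of the marginal likelihood via the computation of Equation~\eqref{eq:putback}.
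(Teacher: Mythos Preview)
Your proposal is correct and follows exactly the approach the paper indicates: the corollary is stated there ``as a direct consequence of Equation~\eqref{eq:putback}'', meaning one specialises the rescaled Catoni bound~\eqref{eq:catoni_ab} to $\post=\post^*$ and collapses the exponent using $n\,\Esp_{\theta\sim\post^*}\emplossnll(\theta)+\KL(\post^*\|\prior)=-\ln\ZDs$. Your additional sanity check that $0<e^{-nb}\le\ZDs\le e^{-na}$ under the boundedness hypothesis is a welcome clarification the paper leaves implicit.
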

 In Section~\ref{sec:bayesian_model_comp}, we exploit the link between PAC-Bayesian bounds and Bayesian marginal likelihood to expose similarities between both frameworks in the context of model selection.
 Beforehand, next Section~\ref{sec:more-bounds} extends the PAC-Bayesian generalization guarantees to unbounded loss functions. This is mandatory to make our study fully valid, as  the negative log-likelihood loss function is in general unbounded (as well as other common regression losses).
 \section{PAC-Bayesian Bounds for Regression}
\label{sec:more-bounds}

This section aims to extend the PAC-Bayesian results of Section~\ref{section:marginal_likelihood} to real valued unbounded loss.
These results are used in forthcoming sections to study $\lossnll$, but they are valid  for broader classes of loss functions. 
Importantly, our new results are focused on regression problems, as opposed to the usual PAC-Bayesian classification framework. 

The new bounds are obtained through a recent theorem of \citet{alquier-15}, stated below
(we provide a proof in Appendix~\ref{appendix:pacbayes_proofs} for completeness).
\begin{thm}[\citet{alquier-15}] \label{thm:general-alquier}
	Given a distribution $\Dcal$ over  $\Xcal   \times   \Ycal$, a hypothesis set $\Fcal$, a loss function $\loss:\Fcal\times\Xcal\times\Ycal\to\Rbb$,   a prior distribution $\prior$ over $\Fcal$, a $\delta \in (0,1]$, and a real number $\lambda>0$,  
	with probability at least $1{-}\delta$ over the choice of $(\Dsample)\sim \Dgen^n$,
	we have
	\begin{align} \label{eq:alquier}
	\forall \post \text{ on } \Fcal\colon \quad &
	\Esp_{f\sim\post} \genloss(f) 
	\ \le \  \Esp_{f\sim\post} \emploss(f) +
	\dfrac{1}{\lambda}\!\left[ \KL(\post\|\prior) +
	\ln\dfrac{1}{\delta}
	+ \Psi_{\ell,\prior,\Dgen}(\lambda,n)  \right],\\[2mm]
\mbox{where }\quad &
	\label{eq:alquier_assumption}
\Psi_{\ell,\prior,\Dgen}(\lambda,n) \ = \ 
\ln	\Esp_{f\sim\prior} \Esp_{{\Dsampleprim\sim \Dgen^n}} 
	\exp\left[{\lambda\left(\genloss(f) - \emplossprim(f)\right)}\right].
	\end{align}
\end{thm}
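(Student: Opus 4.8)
The plan is to derive \eqref{eq:alquier} by the standard ``Catoni-style'' route: combine the Donsker--Varadhan change-of-measure (Gibbs variational) inequality with a single application of Markov's inequality to an exponential moment of $\genloss(f)-\emploss(f)$. No boundedness of $\loss$ is needed; the whole content of the theorem is precisely packaged into the term $\Psi_{\ell,\prior,\Dgen}(\lambda,n)$, which is nothing but the logarithm of the expected value of that exponential moment.

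First I would recall the change-of-measure inequality: for any measurable $h\colon\Fcal\to\Rbb$ and any distributions $\post,\prior$ on $\Fcal$ with $\post$ absolutely continuous with respect to $\prior$,
\[
\Esp_{f\sim\post} h(f) \ \le\ \KL(\post\|\prior) + \ln \Esp_{f\sim\prior} e^{h(f)}\,,
\]
which follows from the nonnegativity of $\KL(\post\|\prior')$ applied to the tilted distribution $\prior'(f)\propto\prior(f)\,e^{h(f)}$. Applying this with $h(f) = \lambda\big(\genloss(f)-\emploss(f)\big)$ gives, for \emph{every} $\post$ on $\Fcal$ simultaneously,
\[
\lambda\,\Esp_{f\sim\post}\big(\genloss(f)-\emploss(f)\big) \ \le\ \KL(\post\|\prior) + \ln V(\Dsample)\,,
\qquad
V(\Dsample) := \Esp_{f\sim\prior} e^{\lambda(\genloss(f)-\emploss(f))}\,.
\]
The key point for the uniformity over $\post$ is that the right-hand side's random part, $V(\Dsample)$, does not depend on $\post$.

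Next I would control $V(\Dsample)$ in probability. Since the integrand is nonnegative, Tonelli's theorem lets me interchange $\Esp_{f\sim\prior}$ with the expectation over the sample, so that $\Esp_{(\Dsample)\sim\Dgen^n} V(\Dsample) = \Esp_{f\sim\prior}\Esp_{\Dsampleprim\sim\Dgen^n} e^{\lambda(\genloss(f)-\emplossprim(f))} = e^{\Psi_{\ell,\prior,\Dgen}(\lambda,n)}$. By Markov's inequality, $\Pr\big[V(\Dsample) \ge \tfrac1\delta\,\Esp V\big]\le\delta$, hence with probability at least $1-\delta$ over $(\Dsample)\sim\Dgen^n$ we have $\ln V(\Dsample) \le \Psi_{\ell,\prior,\Dgen}(\lambda,n) + \ln\tfrac1\delta$. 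Plugging this into the previous display, dividing by $\lambda>0$, and moving $\Esp_{f\sim\post}\emploss(f)$ to the left gives exactly \eqref{eq:alquier}.

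The step I would be most careful about is the logical ordering of the quantifiers rather than any computation: the ``$1-\delta$'' event is the sample event $\{V(\Dsample)\le\tfrac1\delta\Esp V\}$, which is specified \emph{before} any posterior is chosen, so the conclusion is genuinely valid for every (possibly data-dependent) $\post$ at once. One should also note that if $\Psi_{\ell,\prior,\Dgen}(\lambda,n)=+\infty$ (so that Markov gives nothing) or $\post\not\ll\prior$ (so that $\KL=+\infty$) the inequality \eqref{eq:alquier} holds trivially, so these degenerate cases need no separate argument.
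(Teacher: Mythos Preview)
Your proof is correct and follows essentially the same route as the paper's: the Donsker--Varadhan change-of-measure inequality applied to $h(f)=\lambda(\genloss(f)-\emploss(f))$, followed by a single Markov bound on the sample-dependent quantity $V(\Dsample)=\Esp_{f\sim\prior}e^{h(f)}$. Your presentation is slightly more careful than the paper's in making explicit the Tonelli step that identifies $\Esp V$ with $e^{\Psi}$, the reason the bound is uniform over data-dependent $\post$, and the harmless degenerate cases $\Psi=+\infty$ or $\KL=+\infty$.
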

\citeauthor{alquier-15} used Theorem~\ref{thm:general-alquier} to design a learning algorithm for $\{0,1\}$-valued classification losses.  
Indeed, a bounded loss function $\loss:\Fcal\times\Xcal\times\Ycal\to[a,b]$ can be used along with Theorem~\ref{thm:general-alquier} by applying the
Hoeffding's lemma to Equation~\eqref{eq:alquier_assumption}, that gives
$
\Psi_{\ell,\prior,\Dgen}(\lambda,n) \leq
\lambda^2(b{-}a)^2/(2n).
$
More specifically, with $\lambda\eqdots n$, we obtain the following bound
\begin{align} \label{eq:alquier_hoeffding_n}
\forall \post \text{ on } \Fcal\colon \quad 
\Esp_{f\sim\post} \genloss(f) 
\ \le \  \Esp_{f\sim\post} \emploss(f) +
\tfrac{1}{n}\!\left[  \KL(\post\|\prior) +
\ln\tfrac{1}{\delta}\right]+\tfrac12(b-a)^2.
\end{align}
Note that the latter bound leads to the same trade-off as Theorem~\ref{thm:pacbayescatoni} (expressed by Equation~\ref{eq:a_minimser_beta}).
However, the choice $\lambda\eqdots n$ has the inconvenience that the bound value is at least $\frac12(b-a)^2$, even at the limit $n\to\infty$. With $\lambda\eqdots \sqrt n$ the bound converges (a result similar to Equation~\eqref{eq:alquier_hoeffding_sqrtn} is also formulated by \citet{pentina-14}):
\begin{align} \label{eq:alquier_hoeffding_sqrtn}
\forall \post \text{ on } \Fcal\colon \quad 
\Esp_{f\sim\post} \genloss(f) 
\ \le \  \Esp_{f\sim\post} \emploss(f) +
\tfrac{1}{\sqrt n}\!\left[  \KL(\post\|\prior) +
\ln\tfrac{1}{\delta} +\tfrac12(b-a)^2 \right].
\end{align}
\paragraph{Sub-Gaussian losses.}
In a regression context, it may be restrictive to consider strictly bounded loss functions. Therefore, we extend Theorem~\ref{thm:general-alquier} to \emph{sub-Gaussian} losses.
We say that a loss function~$\loss$ is sub-Gaussian with variance factor $s^2$ under a prior $\prior$ and a data-distribution $\Dcal$ if it can be described by a sub-Gaussian random variable $V{=}\genloss(f){-}\loss(f,x,y)$, \ie, its moment generating function is upper bounded by the one of a normal distribution of variance $s^2$ (see \citet[][Section~2.3]{boucheron-13}):
\begin{equation}\label{eq:subG_assumption}
\psi_{_V}(\lambda)
	\ = \ \ln \Esp e^{\lambda V}%
	\ =  \ \ln	\Esp_{f\sim\prior} \Esp_{{(x,y)\sim \Dgen}} 
	\exp\left[{\lambda\left(\genloss(f) - \loss(f, x,y)\right)}\right]
	\ \leq \ {\tfrac{\lambda^2 s^2}{2}}\,
, \quad \forall \lambda  \in \Rbb\,.
\end{equation}
The above sub-Gaussian assumption corresponds to the \emph{Hoeffding assumption} of \citet{alquier-15}, and allows to obtain the following result.
\begin{cor} \label{cor:alquier-subG}
Given $\Dcal$, $\Fcal$,  $\loss$,  $\prior$ and $\delta$ defined in the statement of  Theorem~\ref{thm:general-alquier}, if the loss is sub-Gaussian with variance factor $s^2$, we have,
with probability at least $1{-}\delta$ over the choice of $(\Dsample)\sim \Dgen^n$,
\begin{align*}%
\forall \post \text{ on } \Fcal\colon \quad 
\Esp_{f\sim\post} \genloss(f) 
\ \le \  \Esp_{f\sim\post} \emploss(f) +
\tfrac{1}{n}\!\left[  \KL(\post\|\prior) +
\ln\tfrac{1}{\delta}\right]+\tfrac12\, s^2\,.
\end{align*}
\end{cor}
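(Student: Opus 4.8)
The plan is to specialize Theorem~\ref{thm:general-alquier} to $\lambda\eqdots n$ (exactly the choice that produced~\eqref{eq:alquier_hoeffding_n} in the bounded case) and to show that the sub-Gaussian hypothesis~\eqref{eq:subG_assumption} forces the ``cumulant'' term to satisfy $\Psi_{\ell,\prior,\Dgen}(n,n)\le\tfrac{n s^2}{2}$; once this is in hand, substituting it into~\eqref{eq:alquier} and simplifying $\tfrac1n\cdot\tfrac{n s^2}{2}=\tfrac{s^2}{2}$ yields precisely the claimed inequality.

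First I would unfold $\Psi_{\ell,\prior,\Dgen}(n,n)$ using the \iid\ structure of the ghost sample $\Dsampleprim=\{(x_i',y_i')\}_{i=1}^n$. Since $\emplossprim(f)=\tfrac1n\sum_{i=1}^n\loss(f,x_i',y_i')$, we have $n\big(\genloss(f)-\emplossprim(f)\big)=\sum_{i=1}^n\big(\genloss(f)-\loss(f,x_i',y_i')\big)$, and, conditionally on a fixed $f$, the summands are independent, so the expectation over $\Dsampleprim\sim\Dgen^n$ factorizes:
\begin{equation*}
\Esp_{\Dsampleprim\sim\Dgen^n}\exp\!\Big[n\big(\genloss(f)-\emplossprim(f)\big)\Big]
\ = \ \Big(\,\Esp_{(x,y)\sim\Dgen}\exp\!\big[\genloss(f)-\loss(f,x,y)\big]\Big)^{\!n}\,,
\end{equation*}
so that $\Psi_{\ell,\prior,\Dgen}(n,n)=\ln\Esp_{f\sim\prior}\big(\Esp_{(x,y)\sim\Dgen}\exp[\genloss(f)-\loss(f,x,y)]\big)^{n}$. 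Then I would invoke the sub-Gaussian bound~\eqref{eq:subG_assumption} at $\lambda=1$, read (as the factorization above requires) as an inequality valid for every $f$ in the support of $\prior$: $\Esp_{(x,y)\sim\Dgen}\exp[\genloss(f)-\loss(f,x,y)]\le e^{s^2/2}$. Raising to the $n$-th power, taking the expectation over $f\sim\prior$, and then the logarithm gives $\Psi_{\ell,\prior,\Dgen}(n,n)\le\tfrac{n s^2}{2}$; plugging this into~\eqref{eq:alquier} with $\lambda=n$ produces $\Esp_{f\sim\post}\genloss(f)\le\Esp_{f\sim\post}\emploss(f)+\tfrac1n\big[\KL(\post\|\prior)+\ln\tfrac1\delta\big]+\tfrac{s^2}{2}$, which is the statement.

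The only delicate point, and the step I expect to need the most care, is the reading of the sub-Gaussian assumption: the factorization over the ghost sample needs the moment-generating-function control to hold \emph{conditionally on $f$} (equivalently, uniformly over $\mathrm{supp}(\prior)$), whereas~\eqref{eq:subG_assumption} is written with $\Esp_{f\sim\prior}$ on the outside. The conditional form implies the written marginal one by Jensen's inequality, and it is the form that is genuinely available in the regression examples considered later, so this is a matter of precise statement rather than a real gap. Everything else is elementary bookkeeping: no concentration inequality beyond the assumed MGF bound enters the argument, which is exactly the advantage of routing the proof through Theorem~\ref{thm:general-alquier} instead of re-deriving a Hoeffding-type estimate from scratch.
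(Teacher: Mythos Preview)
Your proposal is correct and follows essentially the same route as the paper: factorize $\Psi_{\ell,\prior,\Dgen}(\lambda,n)$ over the \iid\ ghost sample, apply the sub-Gaussian MGF bound to each factor, and substitute into Theorem~\ref{thm:general-alquier} with $\lambda\eqdots n$. Your explicit flagging of the conditional-on-$f$ reading of~\eqref{eq:subG_assumption} is in fact more careful than the paper's own one-line proof, which hides this issue behind an undifferentiated ``$\Esp$'' when it writes $\ln\Esp\exp[\tfrac{\lambda}{n}\sum_i\ell_i]=\ln\prod_i\Esp\exp[\tfrac{\lambda}{n}\ell_i]$.
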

\begin{proof}
	For  $i = 1\ldots n$, we denote  $\ell_i$ a \iid realization of the random variable 
	$\genloss(f) - \loss(f, x, y) $. 
		\begin{align*}\textstyle
	\Psi_{\ell,\prior,\Dgen}(\lambda,n)
		= \ln \Esp \exp \left[ \tfrac\lambda n \sum_{i=1}^n \ell_i\right]%
		= \ln \prod_{i=1}^n \Esp \exp \left[ \tfrac\lambda n\ell_i\right]
		= \sum_{i=1}^n \psi_{\loss_i}(\tfrac{\lambda}{n})
		\, \leq\, n \tfrac{\lambda^2s^2}{2n^2}
		\,=\,   \tfrac{\lambda^2s^2}{2n},
		\end{align*}
	where the inequality comes from the sub-Gaussian loss assumption (Equation~\ref{eq:subG_assumption}).
	The result is then obtained from Theorem~\ref{thm:general-alquier}, with $\lambda\eqdots n$.
\end{proof}

\paragraph{Sub-gamma losses.}
We say that an unbounded loss function $\loss$ is sub-gamma with a variance factor~$s^2$ and scale parameter $c$, under a prior $\prior$ and a data-distribution $\Dgen$, if it can be described by a 
sub-gamma random variable $V$ %
(see \citet[][Section 2.4]{boucheron-13}), that is 
\begin{equation}\label{eq:subgamma_assumption}
\psi_{_V}(\lambda)
\ \leq \ \tfrac{s^2}{c^2}({-}\ln(1{-}\lambda c) - \lambda c)
\ \leq \ {\tfrac{\lambda^2 s^2}{2(1-c\lambda)}}\,,  \qquad \forall \lambda  \in (0, \tfrac1c)\,.
\end{equation}
Under this sub-gamma assumption, we obtain the following new result, which is necessary to study linear regression in the next sections. 
\begin{cor} \label{cor:subgamma}
	Given $\Dcal$, $\Fcal$,  $\loss$,  $\prior$ and $\delta$ defined in the statement of Theorem~\ref{thm:general-alquier}, if the loss is sub-gamma with variance factor $s^2$ and scale $c<1$, we have,
	with probability at least $1{-}\delta$ over 
$(\Dsample)\sim \Dgen^n$,
	\begin{align} \label{eq:subgamma}
	\forall \post \text{ on } \Fcal\colon \quad 
	\Esp_{f\sim\post} \genloss(f) 
	\ \le \  \Esp_{f\sim\post} \emploss(f) +
	\tfrac{1}{n}\!\left[  \KL(\post\|\prior) +
	\ln\tfrac{1}{\delta}\right]+\tfrac1{2(1-c)}\, s^2\,.
	\end{align}
As a special case, with $\loss\eqdots\lossnll$ and $\post \eqdots \post^*$ (Equation~\ref{eq:post_catoni_theta}), we have
\begin{equation} \label{eq:subgammaZ} 
		\displaystyle\Esp_{\theta\sim\post^*} \genlossnll(\theta) 
		\, \leq\,
		\tfrac{s^2 }{2(1-c)}- \tfrac{1}{n}\ln\left( \ZDs\, \delta \right).
\end{equation}
\end{cor}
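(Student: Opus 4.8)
The plan is to reprise verbatim the argument used for Corollary~\ref{cor:alquier-subG}, replacing the sub-Gaussian control of the moment generating function by the sub-gamma bound~\eqref{eq:subgamma_assumption}, and then to specialize the resulting inequality at $\loss\eqdots\lossnll$, $\post\eqdots\post^*$ by means of the identity~\eqref{eq:putback}.

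First I would bound the term $\Psi_{\ell,\prior,\Dgen}(\lambda,n)$ that enters Theorem~\ref{thm:general-alquier}. Exactly as in the proof of Corollary~\ref{cor:alquier-subG}, writing $\ell_1,\dots,\ell_n$ for the \iid contributions of the random variable $V=\genloss(f)-\loss(f,x,y)$ over a fresh sample, the factorization of the moment generating function gives $\Psi_{\ell,\prior,\Dgen}(\lambda,n)=\sum_{i=1}^n\psi_{_V}(\tfrac\lambda n)$. Applying the sub-gamma inequality~\eqref{eq:subgamma_assumption} at the point $\tfrac\lambda n$ --- which is legitimate precisely when $\tfrac\lambda n\in(0,\tfrac1c)$ --- yields $\Psi_{\ell,\prior,\Dgen}(\lambda,n)\le n\cdot\tfrac{(\lambda/n)^2 s^2}{2(1-c\lambda/n)}=\tfrac{\lambda^2 s^2}{2(n-c\lambda)}$. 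The natural choice is $\lambda\eqdots n$; it is admissible because the hypothesis $c<1$ guarantees $\tfrac\lambda n=1<\tfrac1c$, and it collapses the estimate to $\Psi_{\ell,\prior,\Dgen}(n,n)\le\tfrac{n\,s^2}{2(1-c)}$. Feeding $\lambda\eqdots n$ together with this estimate into~\eqref{eq:alquier} makes the $\tfrac1n\Psi$ term contribute at most $\tfrac{s^2}{2(1-c)}$, which is Equation~\eqref{eq:subgamma}.

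For the special case I would set $\loss\eqdots\lossnll$ and $\post\eqdots\post^*$ in~\eqref{eq:subgamma}, so that the right-hand side reads $\Esp_{\theta\sim\post^*}\emplossnll(\theta)+\tfrac1n\big(\KL(\post^*\|\prior)+\ln\tfrac1\delta\big)+\tfrac{s^2}{2(1-c)}$. I then invoke~\eqref{eq:putback}, namely $n\,\Esp_{\theta\sim\post^*}\emplossnll(\theta)+\KL(\post^*\|\prior)=-\ln\ZDs$: dividing by $n$ shows that the two data-dependent terms sum to $-\tfrac1n\ln\ZDs$, and combining with $\tfrac1n\ln\tfrac1\delta=-\tfrac1n\ln\delta$ produces $-\tfrac1n\ln(\ZDs\,\delta)+\tfrac{s^2}{2(1-c)}$, i.e.\ Equation~\eqref{eq:subgammaZ}.

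Beyond these substitutions the argument is pure bookkeeping on top of Theorem~\ref{thm:general-alquier} and Equation~\eqref{eq:putback}; the one point demanding genuine care is checking that the choice $\lambda\eqdots n$ lies in the admissible range of the sub-gamma moment generating function bound, which is exactly the role of the hypothesis $c<1$ (rather than merely $c>0$): without it the bound at $\tfrac\lambda n=1$ is unavailable and the additive constant $\tfrac{s^2}{2(1-c)}$ would diverge.
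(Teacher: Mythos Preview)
Your argument is correct and follows essentially the same route as the paper: factor the moment generating function over the \iid sample, invoke the sub-gamma bound~\eqref{eq:subgamma_assumption} at the point $\lambda/n=1$ (which is where the hypothesis $c<1$ is used), and plug into Theorem~\ref{thm:general-alquier} with $\lambda\eqdots n$. The paper's proof does not spell out the derivation of~\eqref{eq:subgammaZ}, but your use of~\eqref{eq:putback} to collapse the empirical-loss and KL terms into $-\tfrac1n\ln\ZDs$ is exactly the intended step.
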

\begin{proof}
Following the same path as in the proof of Corollary~\ref{cor:alquier-subG} (with $\lambda\eqdots n$), we have
		\begin{align*}\textstyle
		\Psi_{\ell,\prior,\Dgen}(n,n)
		= \ln \Esp \exp \left[ \sum_{i=1}^n \ell_i\right]%
		= \ln \prod_{i=1}^n \Esp \exp \left[ \ell_i\right]
		= \sum_{i=1}^n \psi_{\loss_i}(1)
		\, \leq\, n \tfrac{s^2}{2(1-c)}
		\,=\,   \tfrac{n\,s^2}{2(1-c)}\,,
		\end{align*}
	where the inequality comes from the sub-gamma loss assumption, with $ 1\in(0,\frac1c)$.
\end{proof}

\paragraph{Squared loss.}

The parameters $s$ and $c$ of Corollary~\ref{cor:subgamma} rely on the chosen loss function and prior, and the assumptions concerning the data distribution.
As an example, consider a regression problem where $\Xcal{\times}\Ycal \subset \Rbb^d{\times}\Rbb$, a family of linear predictors $f_\wb(\xb) = \wb\cdot\xb$, with $\wb\in\Rbb^d$, and a Gaussian prior 
$\Ncal(\zerobf, \sigma_\prior^2\,\Ib)$.  
Let us assume that the input examples are generated by $\xb\!\sim\!\Ncal(\zerobf,\sigx^2\,\Ib)$ with label $y=\wb^*\!\cdot \xb+\epsilon$, where $\wb^*\! \in\! \Rbb^d$ and  $\epsilon\!\sim\!\Ncal(0,\sigma_\epsilon^2)$ is a Gaussian noise.
Under the squared loss function
\begin{equation}\label{eq:sqrloss}
\losssqr(\wb,\xb,y) \, =\, (\wb\cdot\xb-y)^2\,,
\end{equation}
we show in Appendix~\ref{appendix:s} that Corollary~\ref{cor:subgamma}  is valid with 
 $s^2 \geq 2\left[\sigx^2(\sigprior^2d+ \|\wb^*\|^2)+ \sigeps^2(1- c)\right]$ and $c \geq 2\sigx^2\sigprior^2$.
As expected, the bound degrades when the noise increases

\paragraph{Regression versus classification.}
\label{section:discussion}

The classical PAC-Bayesian theorems are stated in a classification context and bound the generalization error/loss of the stochastic \emph{Gibbs predictor} $G_\post$.
In order to predict the label of an example $x\in\Xcal$, the Gibbs predictor first draws a hypothesis $h\in\Fcal$ according to $\post$, and then returns $h(x)$.
\citet{maurer-04} shows that we can generalize PAC-Bayesian bounds on the generalization risk of the Gibbs classifier to any loss function with output between zero and one. 
Provided that $y\in\{-1,1\}$ and $h(x)\in[-1,1]$, a common choice is to use the linear loss function  $\losszozo(h,x,y) = \frac12 - \frac12 y\,h(x)$.
The Gibbs generalization loss is then given by
$R_\Dgen(G_\post) = \Esp_{(x,y)\sim\Dgen}  \Esp_{h\sim\post}  \losszozo(h,x,y) $\,.
Many PAC-Bayesian works use $R_\Dgen(G_\post)$ as a surrogate loss to study the zero-one classification loss of the majority vote classifier  $R_\Dgen(B_\post)$:
\begin{equation} \label{eq:bayes_risk} %
R_\Dgen(B_\post) 
\,=\, \Pr_{(x,y)\sim\Dgen} \Big( y\,\Esp_{h\sim\post}h(x) < 0 \Big)
\,=\, \Esp_{(x,y)\sim\Dgen} I\Big[ y\,\Esp_{h\sim\post}h(x) < 0 \Big]\,,
\end{equation} 
where $I[\cdot]$ being the indicator function. Given a distribution $\post$, %
an upper bound on the Gibbs risk is converted to an upper bound on the majority vote risk by $R_\Dgen(B_\post)\leq 2 R_\Dgen(G_\post)$ \citep{langford-02}.
In some situations, this \emph{factor of two} may be reached, \ie, $R_\Dgen(B_\post)\simeq 2 R_\Dgen(G_\post)$.
In other situations, we may have $R_\Dgen(B_\post)=0$ even if $R_\Dgen(G_\post) = \frac12{-}\epsilon$ (see \citet{graal-neverending} for an extensive study). 
Indeed, these bounds obtained via the Gibbs risk are exposed to be loose and/or unrepresentative of the majority vote generalization error.\footnote{It is noteworthy that the best PAC-Bayesian empirical bound values are so far obtained by considering a majority vote of linear classifiers, where the prior and posterior are Gaussian \citep{ambroladze-06,graal-icml09,langford-02}, similarly to  the Bayesian linear regression analyzed in Section~\ref{section:linreg}.} 

In the current work, we study regression losses instead of classification ones.
That is, the provided results express upper bounds on $\Esp_{f\sim\post} \genloss(f)$ for any (bounded, sub-Gaussian, or sub-gamma) losses. %
Of course, one may want to bound the regression loss of the averaged regressor $F_\post(x) = \Esp_{f\sim\post} f(x)$.  In this case, if the loss function $\loss$ is convex (as the squared loss), Jensen's inequality gives
$\genloss(F_\post)  \, \leq\, \Esp_{f\sim\post} \genloss(f)\,.$
Note that a strict inequality 
 replaces the factor two mentioned above for the classification case, due to the non-convex indicator function of Equation~\eqref{eq:bayes_risk}.

Now that we have generalization bounds for real-valued loss functions, we can continue our study linking PAC-Bayesian results to Bayesian inference. 
In the next section, we focus on model selection.

\section{Analysis of Model Selection}

\label{sec:bayesian_model_comp}

We consider $L$ distinct models $\{\model_i\}_{i=1}^L$, each one defined by a set of parameters $\Theta_i$.
The PAC-Bayesian theorems naturally suggest selecting the model that is best adapted for the given task  by evaluating the bound for each model $\{\model_i\}_{i=1}^L$ and selecting the one with the lowest bound \citep{ambroladze-06,mcallester-03a,zhang-06}. 
This is closely linked with the Bayesian model selection procedure, as we showed in Section~\ref{section:marginal_likelihood} that minimizing the PAC-Bayes bound amounts to maximizing the marginal likelihood. %
Indeed, given a collection of $L$ optimal Gibbs posteriors---one for each model---given by Equation~\eqref{eq:post_catoni_theta},
\begin{equation} \label{eq:optpost_modelselect}
p(\theta|X,Y, \model_i) \ \equiv\  \post_i^*(\theta) \, = \, \tfrac1{\Zi} \prior_i(\theta) \,e^{-n\,\emplossnll(\theta)}, \ \mbox{ for } \theta \in \Theta_i\,,
\end{equation}
the Bayesian Occam's razor criteria  \citep{jeffreys-92,mackay-92}  chooses the one with the higher \emph{model evidence}
\begin{equation} \label{eq:model_evidence}
p(Y|X,\model_i) \ \equiv \ \Zi \,=\, 
\int_{\Theta_i} \, \prior_i(\theta)\, e^{-n \,\emploss(\theta)} \, d \theta\,.
\end{equation}
Corollary~\ref{cor:pacbayescatoni_modelselect} below formally links the PAC-Bayesian and the Bayesian model selection.
To obtain this result, we simply use the bound of Corollary~\ref{cor:subgamma} $L$ times, together with $\lossnll$ and Equation~\eqref{eq:putback}.
From the union bound (\emph{a.k.a.} Bonferroni inequality), it is mandatory to compute each bound with a confidence parameter of
$\delta/L$,
to ensure that the final conclusion is valid with probability at least $1{-}\delta$.
\begin{cor}%
	\label{cor:pacbayescatoni_modelselect}
	Given a data distribution $\Dcal$, a family of model parameters $\{\Theta_i\}_{i=1}^L$  and associated priors $\{\prior_i\}_{i=1}^L$---where $\prior_i$ is defined over $\Theta_i$--- , a $\delta \in (0,1]$,  if the loss is sub-gamma with parameters $s^2$ and $c<1$, then, with probability at least $1 - \delta$ over $(\Dsample) \sim  \Dgen^n $, 
	\begin{equation*} %
	\forall i \in \{1,\ldots,L\}\,:\qquad
		\Esp_{\theta\sim\post_i^*} \genlossnll(\theta) 
		\ \leq \ 
\tfrac1{2(1-c)}\, s^2 - \tfrac{1}{n}\ln\left( \Zi \tfrac\delta L\right).
	\end{equation*}
	where $\post^*_i$ is the Gibbs optimal posterior (Eq.~\ref{eq:optpost_modelselect})
	and $\Zi$ is the marginal likelihood (Eq.~\ref{eq:model_evidence}).
\end{cor}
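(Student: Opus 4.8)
The plan is to reduce the statement to $L$ separate invocations of Corollary~\ref{cor:subgamma} (in the form of Equation~\eqref{eq:subgammaZ}) combined with a union bound over the $L$ models, spending a confidence budget of $\delta/L$ on each. Nothing new needs to be proved about any single model: Equation~\eqref{eq:subgammaZ} already packages the sub-gamma bound \eqref{eq:subgamma} specialized to $\loss\eqdots\lossnll$ and $\post\eqdots\post^*$, after substituting the Gibbs-posterior identity \eqref{eq:putback} for $n\,\Esp_{\theta\sim\post^*}\emplossnll(\theta)+\KL(\post^*\|\prior)=-\ln\ZDs$.

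First I would fix a single model index $i\in\{1,\dots,L\}$ and apply Equation~\eqref{eq:subgammaZ} with the prior $\prior_i$ over $\Theta_i$, with the negative log-likelihood loss $\lossnll$, and—crucially—with the confidence level $\delta/L$ in place of $\delta$. Since by hypothesis $\lossnll$ is sub-gamma with variance factor $s^2$ and scale $c<1$ under each $\prior_i$, and since the optimal Gibbs posterior $\post_i^*$ of Equation~\eqref{eq:optpost_modelselect} is exactly the Gibbs posterior for model $\model_i$ with normalizing constant $\Zi$, Corollary~\ref{cor:subgamma} gives that, with probability at least $1-\delta/L$ over $(\Dsample)\sim\Dgen^n$,
\[
\Esp_{\theta\sim\post_i^*}\genlossnll(\theta)\ \le\ \tfrac{s^2}{2(1-c)}-\tfrac1n\ln\!\big(\Zi\,\tfrac\delta L\big).
\]

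Next I would let $A_i$ denote the "bad" event that this inequality fails for model $i$, so that the previous step shows $\Pr(A_i)\le\delta/L$ for each $i$. By the union bound (Bonferroni inequality), $\Pr\!\big(\bigcup_{i=1}^L A_i\big)\le\sum_{i=1}^L\Pr(A_i)\le L\cdot\tfrac\delta L=\delta$. Hence, with probability at least $1-\delta$, none of the $A_i$ occurs, i.e.\ all $L$ inequalities hold simultaneously, which is precisely the conclusion of the corollary.

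There is no genuinely hard step here; the only points requiring care are bookkeeping ones. One must split the confidence parameter as $\delta/L$ \emph{before} invoking Corollary~\ref{cor:subgamma}, so that the summed failure probabilities collapse to $\delta$; and one must read the sub-gamma hypothesis as holding for every prior $\prior_i$ with a \emph{common} pair $(s^2,c)$, rather than for a single model. It is also worth noting that the events $A_i$ need not be independent—they all depend on the same sample $(\Dsample)$—but the union bound does not require independence, so this poses no obstacle.
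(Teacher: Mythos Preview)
Your proposal is correct and matches the paper's own argument essentially line for line: the paper states that the result follows by applying Corollary~\ref{cor:subgamma} (via Equation~\eqref{eq:putback}, i.e.\ exactly the packaged form~\eqref{eq:subgammaZ} you invoke) once per model at confidence $\delta/L$, and then combining the $L$ events with the union bound. Your additional remarks about needing a common $(s^2,c)$ across priors and about the union bound not requiring independence are accurate clarifications but do not depart from the paper's approach.
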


Hence, under the uniform prior over the $L$ models, choosing the one with the best model evidence is equivalent to choosing the one with the lowest PAC-Bayesian bound. 

\paragraph{Hierarchical Bayes.}
To perform proper inference on hyperparameters, we have to rely on the \emph{Hierarchical Bayes} approach. This is done by considering an \emph{hyperprior} $p(\eta)$ over the set of hyperparameters $\Hrm$. Then, the prior $p(\theta|\eta)$ can be conditioned on a choice of hyperparameter $\eta$. 
The Bayes rule of Equation~\eqref{eq:bayes_updaterule} becomes
$p(\theta,\eta|X,Y) 
=\frac{p(\eta)\, p(\theta | \eta) \,p(Y|X,\theta)}{p(Y|X)}\,.$

Under the negative log-likelihood loss function, we can rewrite the results of Corollary~\ref{cor:subgamma} as a generalization bound 
on $\Esp_{\eta\sim\post_0} \Esp_{\theta\sim\post_\eta^*} \genlossnll(\theta) $, where $\post_0(\eta) \propto {\prior_0(\eta) \,\ZDeta}$ is the hyperposterior on~$\Hrm$ and $\prior_0$ the hyperprior. Indeed, Equation~\eqref{eq:subgammaZ} becomes
		\begin{equation} \label{eq:h_bound}
		\Esp_{\theta\sim\post^*} \genlossnll(\theta) 
		=  \Esp_{\eta\sim\post_0^*} \Esp_{\theta\sim\post_\eta^*} \genlossnll(\theta) 
		\ \leq\ 
\tfrac1{2(1-c)}\, s^2 - \tfrac{1}{n}\ln\left( \Esp_{\eta\sim\prior_0}  \ZDeta\,\delta \right). 
		\end{equation}

To relate to the bound obtained in Corollary~\ref{cor:pacbayescatoni_modelselect}, we consider the case of a discrete hyperparameter set $\Hrm = \{\eta_i\}_{i=1}^L$, with a uniform prior $\prior_0(\eta_i)=\frac1L$ (from now on, we regard each hyperparameter $\eta_i$ as the specification of a model $\Theta_i$). 
Then, Equation~\eqref{eq:h_bound} becomes 
		\begin{equation*} %
		 \Esp_{\theta\sim\post^*} \genlossnll(\theta) 
		=  \Esp_{\eta\sim\post_0^*} \Esp_{\theta\sim\post_\eta^*} \genlossnll(\theta) 
		\ \leq\ 
\tfrac1{2(1-c)}\, s^2 - \tfrac{1}{n}\ln\left(  \textstyle \sum_{i=1}^L  Z_{X,Y,\eta_i}\,\tfrac \delta L \right). 
		\end{equation*}
This bound is now a function of $\sum_{i=1}^L  Z_{X,Y,\eta_i}$ instead of $\max_i Z_{X,Y,\eta_i}$ as in the bound given by the ``best'' model in Corollary~\ref{cor:pacbayescatoni_modelselect}.
This yields a tighter bound, corroborating the Bayesian wisdom that model averaging performs best. %
Conversely, when selecting a single hyperparameter $\eta^*\in\Hrm$, the hierarchical representation is equivalent to choosing a deterministic hyperposterior, satisfying $\post_0(\eta^*)=1$ and $0$ for every other values. We then have
\begin{align*}
 \KL(\post||\prior) \ =\ \KL(\post_0||\prior_0) + \Esp_{\eta \sim \post_0}  \KL(\post_\eta||\prior_\eta) 
 \ =\ \ln(L) + \KL(\post_{\eta^*}||\prior_{\eta^*})\,.
\end{align*}
With the optimal posterior for the selected $\eta^*$, we have
\begin{align*}
n \Esp_{\theta \sim \post} \emplossnll(\theta) +\KL(\post||\prior) & \ =\  n \Esp_{\theta \sim \post_\eta^*} \emplossnll(\theta) +\KL(\post_{\eta^*}^*||\prior_{\eta^*}) +\ln(L) \\[-2mm]
&\ =\ -\ln(Z_{X,Y,\eta^*}) + \ln(L) \, =\, -\ln \left( \tfrac{Z_{X,Y,\eta^*}}{L} \right).
\end{align*}
Inserting this result into Equation~\eqref{eq:subgamma}, we fall back on the bound obtained in Corollary~\ref{cor:pacbayescatoni_modelselect}.
Hence, by comparing the values of the bounds, one can get an estimate on the consequence of performing model selection instead of model averaging.

\section{Linear Regression}
\label{section:linreg}

In this section, we perform \emph{Bayesian linear regression} using the parameterization of \citet{bishop-2006}.  The output space is $\Ycal\eqdots\Rbb$ and, for an arbitrary input space $\Xcal$, we use a mapping function $\phib:\!\Xcal{\to}\Rbb^d$.  

\paragraph{The model.}
Given $(x,y)\in\Xcal\times\Ycal$ and model parameters $\theta\eqdots\tuple{\wb, \sigma}\in\Rbb^d\times\Rbb^+$, we consider the likelihood
$p(y|x,\tuple{\wb, \sigma}) = \Ncal(y|\wb\cdot\phib(\xb), \sigma^2)$.
Thus, the negative log-likelihood loss is
\begin{equation}\label{eq:lossnll_ws}
\lossnll(\dtuple{\wb, \sigma},x,y) \, =\, -\ln{p(y|x,\dtuple{\wb, \sigma})}\
= \tfrac12 \ln(2\pi\sigma^2) + \tfrac1{2\sigma^2} (y - \wb\cdot\phib(x))^2\,.
\end{equation}
For a fixed $\sigma^2$, minimizing Equation~\eqref{eq:lossnll_ws} is equivalent to minimizing the squared loss function of Equation~\eqref{eq:sqrloss}.
We also consider an isotropic Gaussian prior of mean $\mathbf0$ and variance~$\sigmaprior^2$:
$p(\wb|\sigmaprior) = \Ncal(\wb|\mathbf0, \sigmaprior^2\Ib)$.
For the sake of simplicity, we consider fixed parameters~$\sigma^2$ and $\sigmaprior^2$. The Gibbs optimal posterior (see Equation~\ref{eq:post_catoni_theta}) is then given by 
\begin{equation}\label{eq:post_reg} 
\post^*(\wb) \, \equiv\,  
p(\wb|X,Y,\sigma, \sigmaprior) 
\ = \ 
\tfrac{p(\wb| \sigmaprior)\,p(Y|X,\wb, \sigma)}{p(Y|X,\sigma, \sigmaprior)}%
\ =\ 
\Ncal(\wb\,|\,\wpost, A^{-1})\,,
\end{equation}
where 
$A\eqdots\frac1{\sigma^2}\Phib^T\Phib+\frac1{\sigmaprior^2}\Ib$
 ;  $\wpost\eqdots\frac1{\sigma^2}A^{-1}\Phib^T \yb$ 
 ;  $\Phib$ is a $n{\times} d$ matrix such that the $i^{th}$ line is~$\phib(x_i)$\,; 
  $\yb \eqdots [y_1, \ldots y_n]$ is the labels-vector
 ;  and the negative log marginal likelihood is
\begin{align*}
-\ln &\,p(Y|X,\sigma, \sigmaprior)
	=
	\tfrac1{2\sigma^2}\|\yb-\Phib\wpost\|^2
	+\tfrac n 2\ln(2\pi \sigma^2)
	+ 
	\tfrac1{2\sigmaprior^2}\|\wpost\|^2
	+ \tfrac12 \log|A|
	+ d\ln\sigmaprior \\
	&=
	\underbrace{
n\,\emplossnll(\wpost)
		+     \tfrac{1}{2 \sigma^2}  \tr ( \Phib^T \Phib A^{-1})
	}_{n\Esp_{\wb\sim\post^*}\emplossnll(\wb)} 
	{+} 
	\underbrace{
		\tfrac1{2\sigmaprior^2}\tr(A^{-1})
		- \tfrac d2
		+ \tfrac1{2\sigmaprior^2}\|\wpost\|^2
		+ \tfrac12 \log|A|
		+ d\ln\sigmaprior 
	}_{\KL\big(\Ncal(\wpost,A^{-1})\,\|\,\Ncal(\mathbf{0},\sigmaprior^2\Ib)\big)}.
\end{align*}
To obtain the second equality, we substitute
$\tfrac1{2\sigma^2}\|\yb{-}\Phib\wpost\|^2{+}\tfrac n 2\ln(2\pi \sigma^2) = n\,\emplossnll(\wpost) $ and insert\\
\phantom.\hfill
	$\tfrac{1}{2\sigma^2}  \tr ( \Phib^T \Phib A^{-1}) + \tfrac1{2\sigmaprior^2} \tr(A^{-1}) 
= \tfrac12 \tr ( \tfrac{1}{\sigma^2} \Phib^T \Phib A^{-1} + \tfrac1{\sigmaprior^2} A^{-1}) 
=  \tfrac12 \tr ( A^{-1} A )
= \tfrac d2\,.$\hfill\phantom.\\
This exhibits how the Bayesian regression optimization problem is related to the minimization of a PAC-Bayesian bound, expressed by a trade-off between
$\Esp_{\wb\sim\post^*}\emplossnll(\wb)$
and
 $\KL\big(\Ncal(\wpost,A^{-1})\,\|\,\Ncal(\mathbf{0},\sigmaprior^2\,\Ib)\big)$.
 See Appendix~\ref{appendix:linreg} for detailed calculations.

\paragraph{Model selection experiment.}
To produce Figures~\ref{fig:a} and ~\ref{fig:b}, we reimplemented the toy experiment of \citet[Section~3.5.1]{bishop-2006}. 
That is, we generated a learning sample of $15$ data points according to $y=\sin(x)+\epsilon$, where $x$ is uniformly sampled in  the interval $[0,2\pi]$ and $\epsilon\sim\Ncal(0,\frac14)$ is a Gaussian noise. We then learn seven different polynomial models applying Equation~\eqref{eq:post_reg}. More precisely, for a polynomial model of degree $d$, we map input $x\in\Rbb$ to a vector $\phib(x) = [1,x^1,x^2,\ldots,x^d]\in\Rbb^{d+1}$, and we fix parameters $\sigmaprior^2=\frac1{0.005}$ and $\sigma^2=\frac12$.
Figure~\ref{fig:a} illustrates the seven learned models.
Figure~\ref{fig:b} shows the negative log marginal likelihood  computed for each polynomial model, and is designed to reproduce \citet[Figure 3.14]{bishop-2006}, where it is explained that the marginal likelihood correctly indicates that the polynomial model of degree $d=3$ is ``the simplest model which gives a good explanation for the observed data''. We show that this claim is well quantified by the trade-off intrinsic to our PAC-Bayesian approach: the complexity $\KL$ term keeps increasing with the parameter $d\in\{1,2,\ldots,7\}$, while the empirical risk drastically decreases from $d=2$ to $d=3$, and only slightly afterward.  Moreover, we show that the generalization risk (computed on a test sample of size $1000$) tends to increase with complex models (for $d\geq4$). 

\begin{figure}\centering
	\subfloat[Predicted models. Black dots are the $15$ training samples.]
	{\includegraphics[width=.48\textwidth]{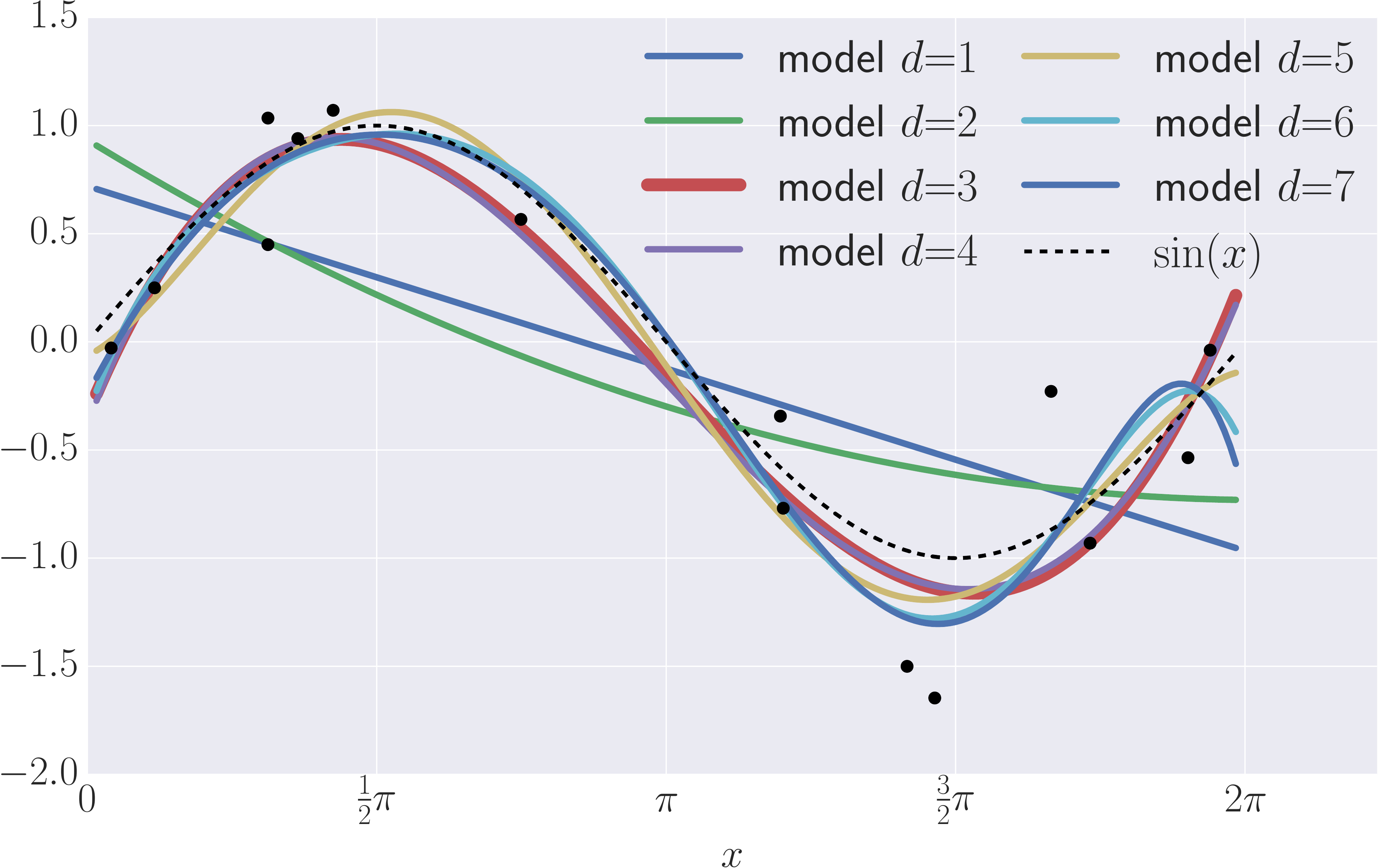}
		\label{fig:a}}\ \ \ 
	\subfloat[Decomposition of the marginal likelihood into the empirical loss and $\KL$-divergence.]
	{\includegraphics[width=.47\textwidth]{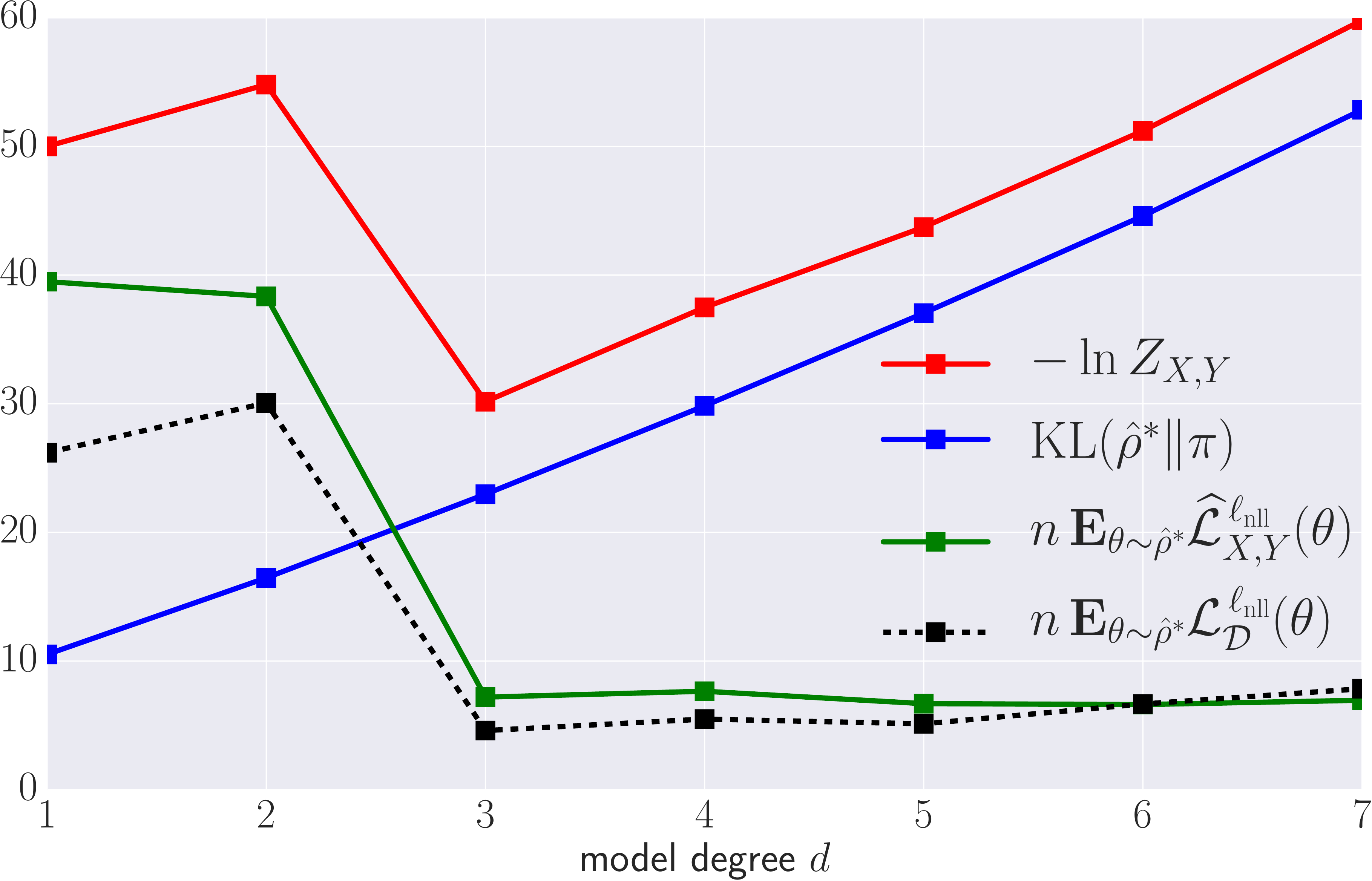}
		\label{fig:b}}
	\vspace{-4mm}
	
	\subfloat[Bound values on a synthetic dataset according to the number of training samples.]
	{\hspace{2mm}\includegraphics[width=.97\textwidth]{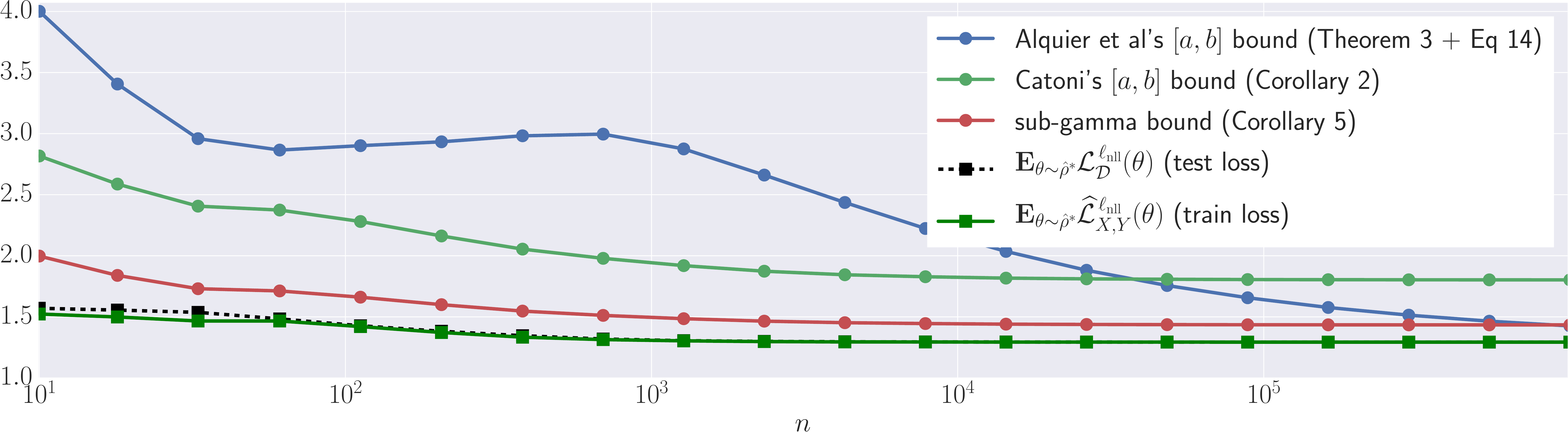}	
		\label{fig:c}}
	
	\caption{Model selection experiment (a-b); and comparison of bounds values (c).}
	\label{fig:MS_experiment}
	\vspace{-3.5mm}
\end{figure}

\paragraph{Empirical comparison of bound values.} Figure~\ref{fig:c} compares the values of the PAC-Bayesian bounds presented in this paper on a synthetic dataset, where each input $\xb{\in}\Rbb^{20}$ is generated by a Gaussian $\xb{\sim} \Ncal(\mathbf0,\Ib)$.
The associated output $y{\in}\Rbb$ is given by $y{=}\wb^*\!\cdot\xb+\epsilon$, with $\|\wb^*\|{=}\frac1{2}$,  $\epsilon{\sim}\Ncal(0,\sigma_\epsilon^2)$, and $\sigma_\epsilon^2 {=} \frac19$. We perform Bayesian linear regression in the input space, \ie, $\phib(\xb){=}\xb$, fixing $\sigma_\prior^2{=}\frac{1}{100}$ and $\sigma^2{=}2$.
That is, we compute the posterior of Equation~\eqref{eq:post_reg} for training samples of sizes from $10$ to $10^6$.
 For each learned model, we compute the empirical negative log-likelihood loss of Equation~\eqref{eq:lossnll_ws}, and the three PAC-Bayes bounds, with confidence parameter of $\delta{=}\frac1{20}$.
Note that this loss function 
is an affine transformation of the squared loss studied in Section~\ref{sec:more-bounds} (Equation~\ref{eq:sqrloss}), \ie,
$\lossnll(\dtuple{\!\wb, \sigma\!},\xb,y) {=} \tfrac12 \ln(2\pi\sigma^2) {+} \tfrac1{2\sigma^2} \losssqr(\wb,\xb,y).$ It turns out that $\lossnll$ is sub-gamma with parameters
$s^2\geq  \frac1{\sigma^2} \big[\sigx^2(\sigprior^2d+ \|\wb^*\|^2)+ \sigeps^2(1- c) \big] $ and 
$c \geq  \tfrac{1}{\sigma^2} (\sigx^2\sigprior^2),$ as shown in Appendix~\ref{appendix:linregsubgamma}. The bounds of 
Corollary~\ref{cor:subgamma} are computed using the above mentioned values of 
$\|\wb^*\|, d,\sigma, \sigx,\sigma_\epsilon, \sigma_\prior$,
leading to $s^2\simeq 0.280$ and $c \simeq 0.005$. As the two other bounds of Figure~\ref{fig:c} are not suited for unbounded loss, we compute their value using a cropped loss $[a,b]=[1,4]$. 
Different parameter values could have been chosen, sometimes leading to another picture: a large value of~$s$ degrades our sub-gamma bound, as a larger $[a,b]$ interval does for the other bounds.\\[1mm]
In the studied setting, 
the bound of Corollary~\ref{cor:subgamma}---that we have developed for (unbounded) sub-gamma losses---gives  tighter guarantees than the two results for $[a,b]$-bounded losses (up to $n{=}10^6$). However, our new bound always maintains a gap of 
$\frac1{2(1{-}c)} s^2$ 
between its value and the generalization loss.
The result of Corollary~\ref{thm:pacbayescatoni_maglike} (adapted from \citet{catoni-07}) for bounded losses suffers from a similar gap, while having higher values than our sub-gamma result. Finally, the result of Theorem~\ref{thm:general-alquier} (\citet{alquier-15}), combined with $\lambda=1/\sqrt{n}$ (Eq.~\ref{eq:alquier_hoeffding_sqrtn}), converges to the expected loss, but it provides good guarantees only for large training sample ($n\gtrsim 10^5$). Note that the latter bound is not directly minimized by our ``optimal posterior'', as opposed to the one with $\lambda=1/{n}$ (Eq.~\ref{eq:alquier_hoeffding_n}), for which we observe values 
between $5.8$ (for $n{=}10^6$)  and $6.4$ (for $n{=}10$)---not displayed on Figure~\ref{fig:c}.

\vspace{-1mm}

\section{Conclusion}

The first contribution of this paper is to bridge the concepts underlying the Bayesian and the PAC-Bayesian approaches;
under proper parameterization, the minimization of the PAC-Bayesian bound maximizes the marginal likelihood. 
This study 
motivates the second contribution of this paper, which is to prove PAC-Bayesian generalization bounds for regression with unbounded sub-gamma loss functions, including the squared loss used in regression tasks.\\[1mm]
In this work, we studied model selection techniques.  On a broader perspective, we would like to suggest that both Bayesian and PAC-Bayesian frameworks may have more to learn from each other than what has been done lately (even if other works paved the way \cite[\eg,][]{bissiri-16,grunwald-2012,seeger-thesis}). 
Predictors learned from the Bayes rule can benefit from strong PAC-Bayesian frequentist guarantees (under the \iid assumption).
Also, the rich Bayesian toolbox may be incorporated in PAC-Bayesian driven algorithms and risk bounding techniques.

\subsubsection*{Acknowledgments}

We thank Gabriel Dubé and Maxime Tremblay for having proofread the paper and supplemental. 

\newpage

\ifdefined\LONGVERSION\else
\begin{small}
	\setlength{\bibsep}{1.5pt} %
		\bibliographystyle{plainnat}
\bibliography{BPB-refs}
\end{small}

\newpage
\fi

\appendix
\section{Supplementary Material}
\label{appendix}

\subsection{Related Work}
\label{appendix:related}

In this section, we discuss briefly other works containing (more or less indirect) links between Bayesian inference and PAC-Bayesian theory, and explain how they relate to the current paper.

\newcommand{\mycite}[2][]{\paragraph{\if&#1&\citeauthor{#2}\else#1\fi\ (\citeyear{#2}) \citep{#2}\,.}}
	\mycite[\citeauthor{seeger-02}]{seeger-02,seeger-thesis}
	 Soon after the initial  work of \citet{mcallester-99,mcallester-03a}, Seeger shows how to apply the PAC-Bayesian theorems to bound the generalization error of Gaussian Processes in a classification context.
   By building upon the PAC-Bayesian theorem initially appearing in \citetA{langford&seeger-01-techreport}---where the divergence between the training error and the generalization one is given by the Kullback-Leibler divergence between two Bernoulli distributions---it achieves very tight generalization bounds.\footnote{The PAC-Bayesian results for Gaussian processes are summarized in \citetA[Section~7.4]{rasmussen-06-book}}
   Also, the thesis of \citet[Section 3.2]{seeger-thesis} foresees this by noticing  that ``the log marginal likelihood incorporates a \emph{similar trade-off} as the PAC-Bayesian theorem'', but using another variant of the PAC-Bayes bound and in the context of classification.

\mycite{banerjee-06}
This paper shows similarities between the early PAC-Bayesian results (\citet{mcallester-03a}, \citetA{langford&seeger-01-techreport}), and the \emph{Bayesian log-loss bound}  (\citetA{freund-97,kakade-04}). This is done by highlighting that the proof of all these results are strongly relying on the same \emph{compression lemma} \citep[Lemma~1]{banerjee-06}, which is equivalent to our \emph{change of measure}  used in the proof of Theorem~\ref{thm:general-alquier} (see forthcoming Equation~\ref{eq:changeofmeasure}). 
Note that the loss studied in the Bayesian part of \citet{banerjee-06} is the negative log-likelihood of Equation~\eqref{eq:loss_vs_likelihood}. Also, as in Equation~\eqref{eq:putback}, the \emph{Bayesian log-loss bound} contains the Kullback-Leibler divergence between the prior and the posterior. However, the latter result is not a generalization bound, but a bound on the training loss that is obtained by computing a surrogate training loss in the specific context of online learning.
Moreover, the marginal likelihood and the model selection techniques are not addressed in \mbox{\citet{banerjee-06}.}

\mycite{zhang-06}
This paper presents a family of information theoretical bounds for \emph{randomized estimators} that have a lot in common with PAC-Bayesian results (although the bounded quantity is not directly the generalization error). Minimizing these bounds leads to the same optimal Gibbs posterior of Equation~\eqref{eq:post_catoni}. The author noted that using the negative log-likelihood (Equation~\ref{eq:loss_vs_likelihood}) leads to the Bayesian posterior, but made no connection with the marginal likelihood.

\mycite{grunwald-2012} This paper proposes the \emph{Safe Bayesian} algorithm, which selects a proper Bayesian \emph{learning rate} --- that is analogous to the parameter $\beta$ of our Equation~\eqref{eq:catoni07}, and the parameter $\lambda$ of our Equation~\eqref{eq:alquier} --- in the context of \emph{misspecified models}.\footnote{The empirical model selection capabilities of the \emph{Safe Bayesian} algorithm has been further studied in \citetA{grunwald-14}.} 
The standard Bayesian inference method is obtained with a fixed learning rate, corresponding to the case $\lambda\eqdots n$ (that is the case we focus on the current paper, see Corollaries~\ref{cor:alquier-subG} and \ref{cor:subgamma}). 
The analysis of \citet{grunwald-2012} relies both on the Minimum Description Length principle \citepA{grunwald-07-book} and PAC-Bayesian theory. Building upon the work of \citet{zhang-06} discussed above, they formulate the result that we presented as Equation~\eqref{eq:putback}, linking the marginal likelihood to the inherent PAC-Bayesian trade-off.
However, they do not compute explicit bounds on the generalization loss, which required us to take into account the complexity term of Equation~\eqref{eq:alquier_assumption}.

\mycite{lacoste-thesis} In a binary classification context, it is shown that the parameter $\beta$ of Theorem~\ref{thm:pacbayescatoni} can be interpreted as a Bernoulli label noise model from a Bayesian likelihood standpoint. For more details, we refer the reader to Section~2.2 of this thesis.

\mycite{bissiri-16}
This recent work studies Bayesian inference through the lens of loss functions. When the loss function is the negative log-likelihood (Equation~\ref{eq:loss_vs_likelihood}), the approach of \citet{bissiri-16} coincides with the Bayesian update rule. As mentioned by the authors, there is some connection between their framework and the PAC-Bayesian one, but ``the motivation and construction are very different.''

\paragraph{Other references.}
See also \citetA{grunwald-07,lacostejulien-11,meir-03,ng-01,rousseau-16} for other studies drawing links between frequentist statistics and Bayesian inference, but outside the PAC-Bayesian framework.

\subsection{Proof of Theorem~\ref{thm:general-alquier}}
\label{appendix:pacbayes_proofs}

Recall that Theorem~\ref{thm:general-alquier} originally comes from~\citet[Theorem 4.1]{alquier-15}. We present below a different proof that follows the key steps of the very general PAC-Bayesian theorem presented in~\citetA[Theorem 4]{graal-aistats16}.

\begin{proof}[Proof of Theorem~\ref{thm:general-alquier}]
	 The \emph{Donsker-Varadhan's change of measure} states that, for any measurable function $\phi:\Fcal\to\Rbb$, we have
	 \begin{equation} \label{eq:changeofmeasure}
	 \Esp_{f\sim\post} \phi(f) \ \leq\ \KL(\post\|\prior) + \ln\left(\Esp_{f\sim\prior}e^{\phi(f)}\right).
	 \end{equation}
	Thus, with
	$\phi(f){\eqdots} \lambda\big(\genloss(f){-}\emploss(f)\big)$, we obtain
	$\forall\, \post \mbox{ on }\Fcal :$
	\begin{eqnarray*}
		\lambda\, \big(	\Esp_{f\sim\post} \genloss(f)-	\Esp_{f\sim\post} \emploss(f)\big) %
		&=&
		\Esp_{f\sim \post} \lambda\, \big(\genloss(f)-\emploss(f)\big)
		\\
		&\leq&
		\KL(\post\|\prior) + \ln \bigg(
		\Esp_{f\sim \prior} e^{\lambda\, \big(\genloss(f)-\emploss(f)\big)}
		\bigg)\,.
	\end{eqnarray*}
	Now, we apply Markov's inequality on the random variable $\displaystyle\zeta_\prior(\Dsample) \eqdots \Esp_{\mathclap{f\sim \prior}} e^{\lambda \big(\genloss(f)-\emploss(f)\big)}$:
\begin{equation*}
		\Pr_{\Dsample\sim \Dgen^n} \left(
		\zeta_\prior(\Dsample)\,\le\,
		\frac{1}{\delta}\Esp_{{\Dsampleprim\sim \Dgen^n}}\zeta_\prior(\Dsampleprim)
		\right)\, \geq\, 1-\delta\,.
\end{equation*}

This  implies that with probability at least $1{-}\delta$ over the choice of $\Dsample\sim \Dgen^n$,
	we have
	$\forall\, \post \mbox{ on }\Fcal :$
	\begin{equation*}\label{eq:presque}
	\Esp_{f\sim\post} \genloss(f) \ \leq\ 	\Esp_{f\sim\post} \emploss(f) 
	+ \frac1\lambda\left[
	\KL(\post\|\prior) + \ln \frac{\displaystyle\Esp_{\Dsampleprim\sim \Dgen^n} \zeta_\prior(\Dsampleprim)}{\delta}
	\right].
	\end{equation*}
\end{proof}

\subsection{Proof of Equations~\eqref{eq:alquier_hoeffding_n} and \eqref{eq:alquier_hoeffding_sqrtn}}

\begin{proof}%
	Given a loss function $\loss:\Fcal\times \Xcal\times \Ycal$, and a fixed predictor $f\in\Fcal$, we consider the random experiment of sampling $(x,y)\in\Dgen$.  We denote  $\ell_i$ a realization of the random variable $\genloss(f) - \loss(f, x, y)$, for  $i = 1\ldots n$. Each $\ell_i$ is \iid,  zero mean, and bounded by $a-b$ and $b-a$, as $\ell(f, x, y) \in [a,b]$.  Thus,
	{\allowdisplaybreaks[4]
		\begin{eqnarray*}
			\Esp_{{\Dsampleprim\sim \Dgen^n}} 
			\exp\left[{\lambda\left(\genloss(f) - \emplossprim(f)\right)}\right]%
			&=& \Esp \exp \left[ \frac\lambda n \sum_{i=1}^n \ell_i\right]\\
			&=& \prod_{i=1}^n \Esp \exp \left[ \frac\lambda n\ell_i\right]\\
			&\leq& \prod_{i=1}^n  \exp \left[ \frac{\lambda^2(a-b-(b-a))^2}{8n^2}\right]
			\\
			&=& \prod_{i=1}^n  \exp \left[ \frac{\lambda^2(b-a)^2}{2n^2}\right]\\
			&=&  \exp \left[ \frac{\lambda^2(b-a)^2}{2n}\right],
		\end{eqnarray*}
	}%
	where the inequality comes from Hoeffding's lemma.
	
	With $\lambda\eqdots n$, Equation~\eqref{eq:alquier} becomes Equation~\eqref{eq:alquier_hoeffding_n} :
	\begin{align*} 
	\Esp_{f\sim\post} \genloss(f) 
	&\ \le \  \Esp_{f\sim\post} \emploss(f) +
	\dfrac{1}{n}\!\left[ \KL(\post\|\prior) +
	\ln\dfrac{1}{\delta} + \frac{n^2(b-a)^2}{2n}\right]\\
	&\ = \  \Esp_{f\sim\post} \emploss(f) +
	\dfrac{1}{n}\!\left[ \KL(\post\|\prior) +
	\ln\dfrac{1}{\delta} \right] + \frac12 (b-a)^2\,.%
	\end{align*}
	Similarly, with $\lambda\eqdots \sqrt n$, Equation~\eqref{eq:alquier} becomes Equation~\eqref{eq:alquier_hoeffding_sqrtn} .
\end{proof}

\subsection{Study of the Squared Loss}
\label{appendix:s}

\newcommand{\mzx}{\mu_{z|\xb}}
\newcommand{\szx}{\sigma_{z|\xb}}
\newcommand{\mbar}{\bar\mu}
\newcommand{\sbar}{\bar\sigma}

We consider a regression problem where $\Xcal\times\Ycal \subset \Rbb^d\times\Rbb$, a family of linear predictors $f_\wb(\xb) = \wb\cdot\xb$, with $\wb\in\Rbb^d$, and a Gaussian prior 
$\Ncal(\zerobf, \sigma_\prior^2\,\Ib)$.  
Let us assume that the input examples 
are generated according to $\Ncal(\zerobf, \sigx^2\Ib)$
and  $\epsilon\sim\Ncal(0,\sigma_\epsilon^2)$ is a Gaussian noise.

We study the squared loss $\losssqr(\wb,\xb,y) \, =\, (\wb\cdot\xb-y)^2$ such that:
\begin{itemize}
	\item $\wb \sim \Ncal(\zerobf, \sigma_\prior^2\,\Ib)$ is given by the prior $\prior$, 
	\item $\xb\sim  \Ncal(\zerobf, \sigx^2\Ib)$ (and $\xb\in\Rbb^d$),
	\item $y = \wb^* \!\cdot \xb + \epsilon$, where $\epsilon\sim\Ncal(0,\sigma_\epsilon^2)$, corresponds to the labeling function.\\
	Thus $y|\xb\sim\Ncal (\xb\cdot \wb^*, \sigma_\epsilon^2)$.
\end{itemize}

Let us consider the random variable $v = \big[ \Esp_\xb \Esp_{y|\xb} \losssqr(\wb,\xb,y)  \big] - \losssqr(\wb,\xb,y)$.
To show that $v$ is a sub-gamma random variable, we will find  values of $c$ and $s$ such that the criterion of Equation~\eqref{eq:subgamma_assumption} is fulfilled, \ie,
\begin{equation*}
\psi_{v}(\lambda)
\ = \ \ln \Esp e^{\lambda v}
\ \leq \ {\tfrac{\lambda^2 s^2}{2(1-c\lambda)}}\,,  \qquad \forall \lambda  \in (0, \tfrac1c)\,.
\end{equation*}

We have,
{%
	\begin{align} 
	\psi_v(\lambda) 
	\ &=\ \nonumber
	\ln \Esp_\xb \Esp_{y|\xb} \Esp_\wb \exp\Big( \lambda \big[ \Esp_\xb \Esp_{y|\xb} (y-\wb\cdot \xb )^2  \big] - \lambda(y-\wb\cdot \xb )^2  \Big)\\
	\ &\leq\ \nonumber
	\ln \Esp_\wb \exp\Big( \lambda  \Esp_\xb \Esp_{y|\xb} (y-\wb\cdot \xb )^2\Big) \\
	\ &=\ \nonumber
	\ln \Esp_\wb \exp\Big( \lambda  \Esp_\xb [\xb\cdot(\wb^*-\wb)]^2 + \lambda\sigeps^2 \Big) \\
	&=\ \nonumber
	\ln  \Esp_\wb \exp\Big( \lambda  \sigx^2\|\wb^*-\wb\|^2 + \lambda\sigeps^2 \Big) \\
	\ & = \ \nonumber
	\ln  \frac1{(1-2\lambda\sigx^2\sigprior^2)^{\frac d2}}  \exp\Big(\frac{ \lambda  \sigx^2 \|\wb^*\|^2}{1-2\lambda\sigx^2\sigprior^2}  + \lambda\sigeps^2 \Big) \\
	&=\ \nonumber
	- \frac d2 \ln  (1-2\lambda\sigx^2\sigprior^2)
	+ \frac{ \lambda  \sigx^2 \|\wb^*\|^2}{1-2\lambda\sigx^2\sigprior^2} + \lambda\sigeps^2 \\
	&\leq\ \nonumber
	\frac{\lambda\sigx^2\sigprior^2d}{1-2\lambda\sigx^2\sigprior^2}
	+ \frac{ \lambda  \sigx^2 \|\wb^*\|^2}{1-2\lambda\sigx^2\sigprior^2} + \lambda\sigeps^2 \\
	&=\ \nonumber
	\frac{\lambda(\sigprior^2\sigx^2 d+\sigx^2 \|\wb^*\|^2+ (1-2\lambda\sigx^2\sigprior^2)\sigeps^2)}{1-2\lambda\sigx^2\sigprior^2}\\
	&=\nonumber
	\frac{\lambda^2 s^2}{2(1-\lambda c)}\,,
	\end{align}
}%
with $s^2 = \dfrac2\lambda\left[\sigx^2(\sigprior^2d+ \|\wb^*\|^2)+ \sigeps^2(1-\lambda c)\right]$ and $c = 2\sigx^2\sigprior^2$\,. 

Recall that Corollary~\ref{cor:subgamma} is obtained with $\lambda\eqdots 1$.

\subsection{Linear Regression : Detailed calculations}
\label{appendix:linreg}

Recall that, from Equation~\eqref{eq:post_reg}, the Gibbs optimal posterior of the described model is given by 
$$p(\wb|X,Y,\sigma, \sigmaprior)  =\Ncal(\wb\,|\,\wpost, A^{-1})\,,$$
with
$A\eqdots\frac1{\sigma^2}\Phib^T\,\Phib+\frac1{\sigmaprior^2}\Ib$
;  $\wpost\eqdots\frac1{\sigma^2}A^{-1}\Phib^T \yb$ 
;  $\Phib$ is a $n{\times} d$ matrix such that the $i^{th}$ line is~$\phib(x_i)$\,; 
$\yb \eqdots [y_1, \ldots y_n]$ is the labels-vector. For the complete derivation leading to this posterior distribution, see \citet[Section~3.3]{bishop-2006} or \citetA[Section~2.1.1]{rasmussen-06-book}. 
 
 \paragraph{Marginal likelihood.}
 We decompose of the marginal likelihood into the PAC-Bayesian trade-off:
\begin{align}
&\!\!\!\!-\ln \,p(Y|X,\sigma, \sigmaprior) \nonumber \\ 
&\ = \tag{$\dagger$}\label{eq:classic}
\tfrac1{2\sigma^2}\|\yb-\Phib\wpost\|^2
+\tfrac n 2\ln(2\pi \sigma^2)
+ 
\tfrac1{2\sigmaprior^2}\|\wpost\|^2
+ \tfrac12 \log|A|
+ d\ln\sigmaprior \\
&\ =
\underbrace{
	n\,\emplossnll(\wpost)
	+   \blue{  \tfrac{1}{2 \sigma^2}  \tr ( \Phib^T \Phib A^{-1}) }
}_{\green{n\Esp_{\wb\sim\post^*}\emplossnll(\wb)} }
\blue{+} 
\underbrace{
	\blue{\tfrac1{2\sigmaprior^2}\tr(A^{-1})
	- \tfrac d2}
	+ \tfrac1{2\sigmaprior^2}\|\wpost\|^2
	+ \tfrac12 \log|A|
	+ d\ln\sigmaprior 
}_{\red{\KL\big(\Ncal(\wpost,A^{-1})\,\|\,\Ncal(\mathbf{0},\sigmaprior^2\Ib)\big)}}\,.
\tag{$\star$} 
\label{eq:star}
\end{align}
Line~(\ref*{eq:classic}) corresponds to the classic form of the negative log marginal likelihood in a Bayesian linear regression context (see \citet[Equation 3.86]{bishop-2006}).

\mbox{Line~(\ref*{eq:star}) introduces three terms that cancel out\,: 
$\blue{\tfrac{1}{2 \sigma^2}  \tr \left( \Phib^T \Phib A^{-1}\right) + \tfrac1{2\sigmaprior^2} \tr\left(A^{-1}\right) - \tfrac 12 d} = 0\,.$}
The latter equality follows from the trace operator properties and the definition of matrix $A$:
\begin{align*}
\blue{\tfrac{1}{2 \sigma^2}  \tr \left( \Phib^T \Phib A^{-1}\right) + \tfrac1{2\sigmaprior^2} \tr\left(A^{-1}\right) }
&=  \tr \left( \tfrac{1}{2 \sigma^2} \Phib^T \Phib A^{-1} + \tfrac1{2\sigmaprior^2} A^{-1}\right) \\
& =  \tr \left( \tfrac12 A^{-1} ( \tfrac{1}{ \sigma^2} \Phib^T \Phib +\tfrac1{\sigmaprior^2}\Ib)\right) \\
& =  \tr \left( \tfrac12 A^{-1} A \right)\\
& = \blue{\tfrac 12\, d} \,.
\end{align*}		

We show below that the expected loss \green{$\Esp_{\wb\sim\post^*}\emplossnll(\wb)$} corresponds to the left part of Line~(\ref*{eq:star}). Note that a proof of equality 
$ \Esp_{\wb \sim  \post}   \wb^T \Phib^T  \Phib \wb = \tr \left( \Phib^T \Phib A^{-1} \right) + \wpost^T \Phib^T  \Phib \wpost $ (Line~\ref*{eq:quadform} below),
known as the ``expectation of the quadratic form'',
can be found in \citetA[Theorem 1.5]{seber-03}.

{\allowdisplaybreaks[4]
	\begin{align*}
\green{n \Esp_{\wb\sim\post} \emplossnll(\wb) }
&= \Esp_{\wb \sim \post} \sum_{i=1}^n -\ln p(y_i|x_i,\wb) \\
& = \Esp_{\wb \sim \post} \left( \frac{n}{2} \ln(2 \pi \sigma^2 ) +  \frac{1}{2 \sigma^2} \sum_{i=1}^n \left(y_i -  \wb \cdot \phib(\xb_i) \right)^2 \right) \\
& =   \frac{n}{2} \ln(2 \pi \sigma^2 ) +   \frac{1}{2 \sigma^2} \Esp_{\wb \sim \post}\left\Vert \yb -  \Phib \wb  \right\Vert ^2  \\
& = \frac{n}{2} \ln(2 \pi \sigma^2 ) +    \frac{1}{2 \sigma^2}\Esp_{\wb \sim \post}\left ( \Vert \yb \Vert^2  - 2 \yb  \Phib \wb +  \wb^T \Phib^T  \Phib \wb  \right) \\
& = \frac{n}{2} \ln(2 \pi \sigma^2 ) +   \frac{1}{2 \sigma^2}\left( \Vert \yb \Vert^2  - 2 \yb  \Phib \widehat{\wb} + \Esp_{\wb \sim  \post}   \wb^T \Phib^T  \Phib \wb  \right) \\
& = \frac{n}{2} \ln(2 \pi \sigma^2 ) +    \frac{1}{2 \sigma^2} \left( \Vert \yb \Vert^2  - 2 \yb  \Phib \widehat{\wb} + \tr \left( \Phib^T \Phib A^{-1} \right) + \wpost^T \Phib^T  \Phib \wpost \right) \tag{$\clubsuit$} \label{eq:quadform} \\
& = \frac{n}{2} \ln(2 \pi \sigma^2 ) +    \frac{1}{2 \sigma^2} \left\Vert \yb -  \Phib \wpost  \right\Vert ^2 +    \frac{1}{2 \sigma^2}  \tr \left( \Phib^T \Phib A^{-1} \right)\\
& = n\,\emplossnll(\wpost) +    \frac{1}{2 \sigma^2}  \tr \left( \Phib^T \Phib A^{-1}\right).
\end{align*}
}

Finally, the right part of Line~(\ref*{eq:star}) is equal to the Kullback-Leibler divergence between the two multivariate normal distributions
$\Ncal(\wpost,A^{-1})$ and $\Ncal(\mathbf{0},\sigmaprior^2\Ib)$\,:
\begin{align*}
\red{\KL\big(\Ncal(\wpost,A^{-1})\,\|\,\Ncal(\mathbf{0},\sigmaprior^2\Ib)\big)}
&= \frac12\left(
\tr\left((\sigmaprior^2\Ib)^{-1}A^{-1}\right)
+ 				\frac1{\sigmaprior^2}\|\wpost\|^2
- d
+  \log\frac{|\sigmaprior^2\Ib|}{|A|}
\right)\\
&= \frac12\left(
\frac1{\sigmaprior^2}\tr\left(A^{-1}\right)
+ 				\frac1{\sigmaprior^2}\|\wpost\|^2
- d
+  \log{|A|}
+d\ln\sigmaprior^2 
\right).
\end{align*}		

\subsection{Linear Regression: PAC-Bayesian sub-gamma bound coefficients}
\label{appendix:linregsubgamma}

We follow then exact same steps as in Section~\ref{appendix:s}, except that we replace the random variable $v$ (giving the squared loss value) by a random variable $v'$ giving the value of the loss
\begin{equation*}%
\lossnll(\dtuple{\wb, \sigma},\xb,y) 
= \tfrac12 \ln(2\pi\sigma^2) + \tfrac1{2\sigma^2} (y - \wb\cdot\xb)^2\,,
\end{equation*}
where $\wb$, $\xb$ and $y$ are generated as described in Section~\ref{appendix:s}. We aim to find the values of $c$ and $s$ such that the criterion of Equation~\eqref{eq:subgamma_assumption} is fulfilled, \ie,
\begin{equation*}
\psi_{v'}(\lambda)
\ = \ \ln \Esp e^{\lambda v'}
\ \leq \ {\tfrac{\lambda^2 s^2}{2(1-c\lambda)}}\,,  \qquad \forall \lambda  \in (0, \tfrac1c)\,.
\end{equation*}
We obtain
	\begin{align} 
\psi_{v'}(\lambda) 
\ &=\ \nonumber
\ln \Esp_\xb \Esp_{y|\xb} \Esp_\wb \exp\Big( \tfrac{\lambda}{2\sigma^2} \big[ \Esp_\xb \Esp_{y|\xb} (y-\wb\cdot \xb )^2  \big] - \lambda(y-\wb\cdot \xb )^2  \Big)\\
\ &\leq\ \nonumber
\ln \Esp_\wb \exp\Big( \tfrac{\lambda}{2\sigma^2}  \Esp_\xb \Esp_{y|\xb} (y-\wb\cdot \xb )^2\Big) \\
&\vdots\\
&=\ \nonumber
\frac{\tfrac{\lambda}{2\sigma^2}(\sigprior^2\sigx^2 d+\sigx^2 \|\wb^*\|^2+ (1-2\tfrac{\lambda}{2\sigma^2}\sigx^2\sigprior^2)\sigeps^2)}{1-2\tfrac{\lambda}{2\sigma^2}\sigx^2\sigprior^2}\\
&=\nonumber
\frac{\lambda^2 s^2}{2(1-\lambda c)}\,,
\end{align}
with 
\begin{align*}
c &\,=\, \tfrac{1}{2\sigma^2}\Big[  2\sigx^2\sigprior^2\Big]
\ =\  \tfrac{1}{\sigma^2} (\sigx^2\sigprior^2)\,,\\
s^2 &\,=\,  \tfrac{1}{2\sigma^2} \Big[\dfrac2\lambda\left[\sigx^2(\sigprior^2d+ \|\wb^*\|^2)+ \sigeps^2(1-\lambda c)\right]\Big] \ = \ 
 \dfrac1{\lambda\sigma^2} \Big[\sigx^2(\sigprior^2d+ \|\wb^*\|^2)+ \sigeps^2(1-\lambda c) \Big] 
\end{align*}

\ifdefined\LONGVERSION
\begin{small}
	\setlength{\bibsep}{2pt} %
	\bibliographystyle{plainnat}
	\bibliography{BPB-refs}
\end{small}
\else
\begin{small}
	\setlength{\bibsep}{2pt} %
	\renewcommand\refnameA{Supplementary Material References}
	\bibliographystyleA{plainnat}
	\bibliographyA{BPB-refs}
\end{small}
\fi

\end{document}